\spnewtheorem{myclaim}{Claim}{\bfseries}{\itshape}
\newcommand{\N}{\mathbb{N}}
\newcommand{\R}{\mathbb{R}}
\newcommand{\calS}{\mathcal{S}}
\renewcommand{\epsilon}{\varepsilon}
\newcommand{\eps}{\varepsilon}
\DeclareMathOperator{\E}{\mathbb{E}}
\newcommand{\ooea}{$(1 + 1)$-EA\xspace}
\newcommand{\TwoLins}{\ensuremath{\textsc{TwoLin}^{\rho,\ell}}\xspace}
\newcommand{\TwoLin}{\textsc{TwoLin}\xspace}
\let\oldalign\align
\let\oldendalign\endalign
\renewenvironment{align}
  {\linenomathNonumbers\oldalign}
  {\oldendalign\endlinenomath}
\begin{document}

\title{Two-Dimensional Drift Analysis:}
\subtitle{Optimizing Two Functions Simultaneously Can Be Hard}

\author{Duri Janett and Johannes Lengler}
\authorrunning{D. Janett, J. Lengler}
\institute{Department of Computer Science \\ETH Z{\"u}rich, Switzerland}

\maketitle
\begin{abstract}
In this paper we show how to use drift analysis in the case of two random variables $X_1, X_2$, when the drift is approximatively given by $A\cdot (X_1,X_2)^T$ for a matrix $A$. The non-trivial case is that $X_1$ and $X_2$ impede each other's progress, and we give a full characterization of this case. As application, we develop and analyze a minimal example \TwoLin of a dynamic environment that can be hard. The environment consists of two linear functions $f_1$ and $f_2$ with positive weights, and in each generation selection is based on one of them at random. They only differ in the set of positions that have weight $1$ and $n$. We show that the \ooea with mutation rate $\chi/n$ is efficient for small $\chi$ on \TwoLin, but does not find the shared optimum in polynomial time for large $\chi$.\footnotetext{An extended abstract of this paper, containing only the results without most proofs, has been published at PPSN~\cite{janett2022twoppsn}. Note that the proof of Theorem 2(a) in the PPSN version (Theorem~\ref{thm:2Ddrift}(a) here) contained a mistake in Equation (13), where the expectation was erroneously interchanged with the norm operator. To avoid this issue, the proof of Theorem 3(a) in this paper uses a different (linear) potential function.}
\end{abstract}
%
%
%
\textbf{Keywords:} {drift analysis, runtime analysis, theory of evolutionary algorithms, dynamic environments}

\section{Introduction}
\label{sec:Intro}

Evolutionary Algo\-rithms (EAs) and other Randomized Search Heuristics (RSHs) are general-purpose optimization heuristics that are used in a wide range of applications. They are black-box optimizers, which means that, for a given function~$f:\calS \to \R$ to be optimized, they can only access $f$ by querying $f(x)$ for search points $x$ (\emph{individuals}) in the search space $\calS$. The value $f(x)$ is also called the \emph{fitness} of $x$. EAs use an iterative approach in which they maintain a set of search points called the current \emph{population}. In each generation, they create new candidate solutions (\emph{offspring}) from the current population (\emph{parent population}) by alteration and recombination (\emph{mutation} and \emph{crossover}). Then the next population is selected from parents and offspring, based on their fitnesses. Thus EAs are a generalized form of local search. 

The best studied discrete search space is the hypercube $\calS = \{0,1\}^n$. One of the simplest EAs on the hypercube is the \ooea, see also Algorithm~\ref{alg:1}. It comes with a \emph{mutation parameter} $\chi >0$, and maintains a population of size one, i.e., only a single parent individual. Then in each generation it creates an offspring by \emph{standard bit mutation}, which flips each bit of the parent independently with probability $\chi/n$. Hence it flips $\chi$ bits in expectation. It then selects the fitter of parent and offspring for the next generation. 

One of the major tools for analyzing EAs is \emph{drift analysis}~\cite{lengler2020drift}. With this approach, every state of the algorithm is mapped to a single non-negative real number, called \emph{potential}, and the potential is zero if and only if the optimum is found. For the \ooea, the potential can simply be the distance of the current individual from the optimum, but can also be a much more complex function, e.g.~\cite{witt2013tight}. The \emph{drift} of the potential is then the expected change in one generation, and it suffices to know bounds on the drift in order to derive bounds on the expected optimization time. Two particularly frequent cases are that the drift is \emph{additive} (independent of the current potential) or \emph{multiplicative} (proportional to the current potential). Drift analysis works well in many situations, but has an important drawback: it is only applicable if the essence of each population can be captured by a single real value. Although this is possible in surprisingly many situations, there are also limitations to this approach. Especially if the population size is large, or if the function $f$ is complicated, then it is not always possible to characterize a population by a single number.

In this paper, we make a step forward to extending the scope of drift analysis. We show how it can still be applied if the population can be described by a \emph{pair} $X^t = (X_1^t,X_2^t)$ of non-negative real numbers instead of a single real number. Specifically, we consider the case that the drift is approximatively multiplicative, i.e., that $\E[X^t-X^{t+1}] \approx A \cdot X^t$ for some matrix $A= \Big(\begin{matrix} a & b \\ c & d\end{matrix}\Big) \in \R^{2\times 2}$, where we treat $X^t$ as column vector. The drift in this case could also be called ``linear'', but the drift in the analogous one-dimensional case is traditionally called ``multiplicative''. We are mainly interested in the case that $a,d>0$ and $b,c <0$, as we will explain below. 

The main contribution of this paper is that we give a general solution for two-dimensional processes that follow such a multiplicative drift. We show that such a process reaches the target potential $(0,0)$ quickly if and only if the matrix $A$ has two positive eigenvalues (where we omit threshold cases where eigenvalues are zero). The formal statement is in Theorem~\ref{thm:2Ddrift}. 

Let us briefly discuss the assumptions $a,d>0$ and $b,c <0$. For the sake of this discussion, let us assume that the drift is \emph{exactly} $A\cdot X^t$. In this case, the two random variables $X_1,X_2$ \emph{individually} have positive drift, but impose a negative drift \emph{on each other}.\footnote{We follow the convention that we always call drift towards the optimum \emph{positive}, and drift away from the optimum \emph{negative}. Since we assume $X_1,X_2 \ge 0$ with target potential $(0,0)$, this is the reason for considering the difference $X^{t}-X^{t+1}$ for the drift, not $X^{t+1}-X^t$.} In particular, in this case there are values of $X^t$ such that the drift of $X_1$ is positive, and other values of $X^t$ for which the drift of $X_1$ is negative, and likewise for $X_2$. Other cases are either impossible or degenerate: 
\begin{itemize}
\item If $b=0$ or $c=0$, this means that one of the two processes is independent of the other and can be analyzed separately.
\item It is impossible to have multiplicative drift with $b>0$ since then the state $(X_1,X_2)$ with $X_1 =0$ and $X_2 >0$ would have strictly positive drift in $X_1$, in formula $\E[X_1^{t}-X_1^{t+1} \mid X^t = (X_1,X_2)] = a X_1 +bX_2 = bX_2$. Thus the drift is positive towards $X_1=0$. This is impossible because $X_1$ is already at $0$, and we assumed that $X_1\ge 0$. For the same reason, $c>0$ is impossible.
\item If $b,c <0$ and $a<0$ then the drift of $X_1$ is always negative (away from the optimum) for all values of $X^t$ except for $X^t =(0,0)$. More precisely, the drift of $X_1$ is $-aX_1-bX_2 \le -aX_1$. Hence, $X_1$ has a negative multiplicative drift, regardless of the value of $X_2$. Under reasonable assumptions on the step size of the process, the Negative Drift Theorem implies that $X_1$ does not reach $0$ quickly in such situations~\cite{lengler2020drift}. If $b,c <0$ and $a=0$ then the situation is not much better: the drift of $X_1$ is always non-positive. In such a situation we can not hope for a general result telling us that $X_1$ converges to $0$ quickly. 
\item Analogously to the previous item, $X_2$ does not reach $0$ quickly if $b,c <0$ and $d <0$. As before, the case $b,c <0$ and $d=0$ is not much better. 
\end{itemize}
Hence the only remaining case is $b,c<0$ and $a,d >0$, which is the case that we study in this paper.

Our setting has some similarity with the breakthrough result by Jonathan Rowe on linear multi-objective drift analysis~\cite{rowe2018linear}, one of the most underrated papers of the field in the last years. Our positive result overlaps with the result of~\cite{rowe2018linear}.\footnote{For direct comparison it is important to note that~\cite{rowe2018linear} works with the matrix $I-A$ instead of $A$, where $I$ is the identity matrix.} He gives a sufficient condition for fast convergence that works in arbitrary dimensions: convergence is always fast if the matrix $A$ has an eigenvector which contains only real, positive entries, and which has a corresponding positive real eigenvalue. In two dimensions with $b,c<0$ and $a,d>0$, we show in Lemma~\ref{lem:defgamma} that this is the case if and only if both eigenvalues of $A$ are positive. Hence, for two dimensions the positive criterion in~\cite{rowe2018linear} matches our positive criterion, except that our criterion is more explicit.

The criterion from~\cite{rowe2018linear} is sufficient in arbitrary dimension, and the paper also gives a very nice collection of examples. However, the paper does not contain any negative results, so it is unclear whether this criterion is also necessary in general. We suspect that it is not necessary in higher dimensions. In contrast, our main theorem contains matching positive and negative results. Moreover, while~\cite{rowe2018linear} assumes that the drift is \emph{exactly} given by $A\cdot X^t$ (or lower bounded by that), we show our result even if the drift is only \emph{approximately} given by $A\cdot X^t$. This extension is non-trivial, and indeed, the largest portion of our proof goes into showing that the result is robust under such error terms. Most applications have minor order error terms in the drift (and so does the application in this paper), so we believe that this is quite valuable.

\subsection{The Application: \TwoLin}

We demonstrate that two-dimensional drift analysis is useful by analyzing the \ooea on a new dynamic benchmark \TwoLin. We think that both this benchmark and the runtime results have value of their own, so before describing them we first give some context by discussing related result. This will allow us to explain our motivation for considering this specific setting. \smallskip

To apply EAs effectively, it is important to understand \emph{failure modes} that are specific to some EAs or some parameter settings, so that one can avoid employing an algorithm in inadequate situations. This line of research started with the seminal work of Doerr et al.~\cite{doerr2013mutation}, in which they showed that the $(1+1)$-EA with mutation rate $\chi/n$ is inefficient on some monotone functions\footnote{A function is monotone if the fitness improves whenever we flip a zero-bit into a one-bit.} if $\chi>16$, while it is efficient on all monotone functions if $\chi<1$. The transition happens for constant $\chi$, which is the most reasonable parameter regime. In particular, the failure mode is not related to the trivial problems that occur for extremely large mutation rates, $\chi \gg \log n$, where the algorithm fails to produce neighbours in Hamming distance one\cite{witt2013tight}. Subsequently, the results on how the mutation rate and related parameters affect optimiziation of monotone functions have been refined~\cite{colin2014monotonic,lengler2018drift,lengler2019does} and extended to a large collection of other EAs~\cite{lengler2019general,lengler2021exponential}. To highlight just one result, the $(\mu+1)$-EA with standard mutation rate $1/n$ fails on some monotone functions if the population size $\mu$ is too large (but still constant in $n$)~\cite{lengler2021exponential}. Other algorithm-specific failure modes include
\begin{enumerate}[(i)]
\item non-elitist selection strategies with too small offspring reproductive rate (e.g., comma strategies with small offspring population size)~\cite{jagerskupper2007plus,lehre2010negative,dang2016self,antipov2019efficiency,doerr2021lower,rowe2014choice};
\item elitist algorithms (and also some non-elitist strategies) in certain landscapes with deceptive local optima~\cite{dang2021escaping,dang2021non};  
\item the self-adjusting $(1,\lambda)$-EA if the target success probability is too large~\cite{hevia2021self,kaufmann2023onemax,kaufmann2022self};
\item Min-Max Ant Systems with too few ants~\cite{neumann2010few}; probably the compact Genetic Algorithm and the Univariate Marginal Distribution Algorithms have a similar failure mode for too large step sizes~\cite{sudholt2019choice,lengler2021complex}. 
\end{enumerate}
It is important to note that all aforementioned failure modes are specific to the algorithm, not to the problem. Of course, there are many problems which are intrinsically hard, and where algorithms fail due to the hardness of the problem. However, in the above examples there is a large variety of other RSHs which can solve the problems easily. Except for (ii), the failure modes above even happen on the OneMax problem, which is traditionally the easiest benchmark for RSHs. Since failure modes can occur even in simple situations, it is important to understand them in order to avoid them. 

Unfortunately, some failure modes have been found on benchmarks that are rather technical, in particular in the context of monotone functions. Recently, it was discovered that the same failure modes as for monotone functions can be observed by studying certain~\emph{dynamic environments}, more concretely \emph{Dynamic Linear Functions} and the \emph{Dynamic Binary Value} function DynBV~\cite{lengler2018noisy,lengler2020large,lengler2021runtime}. These environments are very simple, so they allow to study failure modes in greater detail. Crucially, failure in such environments is due to the algorithms, not the problems: they all fall within a general class of problems introduced by Jansen~\cite{jansen2007brittleness} and called \emph{partially-ordered EA} (PO-EA) by Colin, Doerr and Ferey~\cite{colin2014monotonic}, which can be solved efficiently by Random Local Search (RLS) and the $(1+1)$-EA. More precisely, the $(1+1)$-EA with mutation rate $\chi/n$ is known to have optimization time $O(n\log n)$ for $\chi<1$, and $O(n^{3/2})$ for $\chi=1$.\footnote{The statement for $\chi=1$ is contained in~\cite{jansen2007brittleness}, but the proof was wrong. It was later proven in~\cite{colin2014monotonic}.} Hence, they are not intrinsically hard. Both dynamic environments define a set of linear functions with positive weights, and redraw the fitness function in each generation from this set. 

A potential counterargument against these two dynamic environments is that the set of fitness functions is very large. Thus, during optimization, the algorithm may never encounter the same environment twice. In applications, it seems more reasonable that the setup switches between a small set of different environments. Such as a chess engine, which is trained against several, but not arbitrarily varying number of opponents, or a robot, which is trained in a few training environments. Thus, here we propose a \emph{minimal example} of a dynamic environment, \TwoLin, in which EAs may exhibit failure modes. For $0\le \ell \le 1$, we define two functions via
\begin{align}
\begin{split}\label{eq:deff1f2}
f_1(x) := f_1^{\ell}(x) & := \sum\nolimits_{i=1}^{\lfloor \ell n\rfloor} nx_i + \sum\nolimits_{i=\lfloor \ell n\rfloor+1}^n \phantom{n}x_i,\\
f_2(x) := f_2^{\ell}(x) & := \sum\nolimits_{i=1}^{\lfloor \ell n\rfloor} \phantom{n}x_i + \sum\nolimits_{i=\lfloor \ell n\rfloor+1}^n nx_i.
\end{split}
\end{align}
Then for $0\le \rho \le 1$, \TwoLins is the probability distribution over $\{f_1^{\ell},f_2^\ell\}$ that chooses $f_1$ with probability $\rho$ and $f_2$ with probability $1-\rho$. In each generation $t$, a random function $f^t \in \{f_1,f_2\}$ is chosen according to \TwoLins, and selection of the next population is based on this fitness function $f^t$. Note that, similar to monotone functions, $f_1$ and $f_2$ share the global optimum at $(1\ldots 1)$ (which is crucial for benchmarks in dynamic optimization), have no local optima, and flipping a zero-bit into a one-bit always increases the fitness. This is why it falls into the framework of PO-EA, and is hence optimized in time $O(n\log n)$ by the $(1+1)$-EA with mutation rate $c/n$, for any constant $c<1$.

\subsubsection{Results on \TwoLin} 
We show that even for the simple setting of \TwoLins, the (1+1)-EA has a failure mode for mutation rates $\chi/n$ that are too large. For all constant values $\rho,\ell \in (0,1)$, we show that for sufficiently small $\chi$ the algorithm finds the optimum of $\TwoLins$ in time $O(n\log n)$ if started with $o(n)$ zero-bits, but it takes superpolynomial time for large values of $\chi$. For the symmetric case $\rho = \ell = .5$ the threshold between the two regimes is at $\chi_0 \approx 2.557$, which is only slightly larger than the best known thresholds for the $(1+1)$-EA on monotone functions ($\chi_0 \approx 2.13$~\cite{lengler2018drift,lengler2019general}) and for general Dynamic Linear Functions and DynBV ($\chi_0 \approx 1.59$~\cite{lengler2018noisy,lengler2020large,lengler2021runtime}). Thus, we successfully identify a minimal example in which the same failure mode of the \ooea shows as for monotone functions and for the general dynamic settings, and it shows almost as early as in those settings. 

For the symmetric case $\rho=\ell =.5$, it is possible to describe a search point (and thus the state of the algorithm) by the number $Z$ of zero-bits, which allows a fairly standard application of common one-dimensional drift theorems. However, in the asymmetric cases $\rho \neq 1/2$ and/or $\ell\neq 1/2$, the drift is no longer a function of $Z$. 
This is inherent to the problem. The state of the algorithm is insufficiently characterized by a single quantity. Instead, a natural characterization of the state needs to specify \emph{two} quantities: the number of zero-bits in the left and right part of the string, denoted by $X_L$ and $X_R$, respectively. Close to the optimum, the drift is approximatively multiplicative, so this yields a natural application for our two-dimensional drift theorem.

It remains open whether the positive result also holds in full generality when the algorithm starts with $\Omega(n)$ zero-bits. However, we provide an interesting \emph{Domination Lemma} that sheds some light on this question. We call $\Delta_{1,0}$ the drift conditional on flipping one zero-bit in the left part of the bit-string and no zero-bit in the right part, and conversely for $\Delta_{0,1}$. Those two terms dominate the drift close to the optimum. We prove that (throughout the search space, not just close to the optimum), if \emph{both} $\Delta_{1,0}$ and $\Delta_{0,1}$ are positive, then the total drift is also positive. This is enough to remove the starting condition for the symmetric case $\rho = \ell = 1/2$. However, in general, the drift close to the optimum is a weighted sum of those two terms, which may be positive without both terms individually being positive.

\section{Preliminaries and Definitions}
\label{sec:definitions}
Throughout the paper, $\chi>0$ and $\rho, \ell \in [0,1]$ are constants, independent of $n$, and all Landau notation is with respect to $n \to \infty$. We say that an event $\mathcal E_n$ holds \emph{with high probability} or \emph{whp} if $\Pr[\mathcal E_n] \to 1$ for $n\to \infty$. 
The environment \TwoLins is the probability distribution on $\{f_1^{\ell},f_2^{\ell}\}$, which assigns probability $\rho$ and $1- \rho$ to $f_1^{\ell}$ and $f_2^{\ell}$, respectively, where $f_1^{\ell}$ and $f_2^{\ell}$ are given by~\eqref{eq:deff1f2}. The \emph{left} and \emph{right} part of a string $x\in \{0,1\}^n$, denoted by $x_L$ and $x_R$, refers to the first $\lfloor \ell n \rfloor$ bits of $x$ and to the remainder of the string, respectively.
We will consider the \ooea with mutation rate $\chi/n$ for maximization on $\mathcal D := \TwoLins$, which is given in Algorithm~\ref{alg:1}. By $Z(x)$ we denote the number of zero-bits in $x\in\{0,1\}^n$, and throughout the paper we denote $Z^t := Z(x^{t})$, where $x^t$ is the search point in generation $t$ as in Algorithm~\ref{alg:1}. 
\begin{algorithm}
  \caption{The $(1+1)$-EA with mutation rate $\chi/n$ in environment $\mathcal D$.
    \label{alg:1}}
        Sample $x^{0}$ from $\{0,1\}^n$ uniformly at random (or start with pre-specified $x^0$).\\
        \For{$t=0,1,2,3,\dots$}{
            Draw $f^{t}$ from $\mathcal D$. \\ 
            Create $y^{t}$ by flipping each bit of $x^{t}$ independently with probability $\chi/n$.  \\
            Set $x^{t+1}= \arg \max \{f^{t}(x) \mid x \in \{x^{t},y^{t}\}\}$.  
        }
\end{algorithm}
The \emph{runtime} of an algorithm refers to the number of function evaluations before the algorithm evaluates the shared global maximum of $\TwoLins$ for the first time. For typesetting reasons we will write column vectors in horizontal form in inline text, e.g. $(X_L,X_R)$. We denote by $\|x\| := \max_i\{|x_i|\}$ the $\infty$-norm of a vector $x$, and similarly for matrices.

Close to the optimum, events where two or more zero-bits flip contribute negligibly to the drift. This standard argument is given by the following lemma. 
\begin{lemma}\label{lem:several_flips}
Let $\chi>0$ be a constant, and let $\mathcal E_{i}^t$ be the event that the \ooea with mutation rate $\chi/n$ in step $t$ flips exactly $i$ zero-bits (no restriction on the number of one-bits that are flipped). Let $F^{t}$ be the total number of bits that are flipped in that mutation. Let $Z^{t} = Z(x^t)$ be the number of zero-bits in $x^{t}$. Then, regardless of the fitness function and the selection mechanism, for all $\beta\in [0,1]$,
\begin{align*}
\sum_{i=2}^\infty \E\big[|Z^t-Z^{t+1}| \  \big| \ Z^{t} = \beta n, \mathcal E_i^t\big]\cdot \Pr[\mathcal E_i^t] &   \le \sum_{i=2}^\infty \E[F^{t} \mid Z^{t} = \beta n, \mathcal E_i^t]\cdot \Pr[\mathcal E_i^t] \\
& = O(\beta^2).
\end{align*}
In other words, mutations which flip at least two zero-bits contribute at most $\pm O(\beta^2)$ to the drift of $Z(x^{t})$.
\end{lemma}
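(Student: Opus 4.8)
The plan is to prove the displayed inequality pointwise and then to control the right-hand sum by a single second-moment estimate. The first inequality is essentially trivial and holds term by term, independently of the fitness function and selection rule: conditioned on any concrete outcome of the mutation, either the offspring is rejected, so that $Z^{t+1}=Z^t$ and $|Z^t-Z^{t+1}|=0$, or it is accepted, so that $x^{t+1}=y^t$ differs from $x^t$ in exactly $F^t$ positions and therefore the number of zero-bits can change by at most $F^t$. In both cases $|Z^t-Z^{t+1}|\le F^t$, and taking conditional expectations given $Z^t=\beta n$ and $\mathcal E_i^t$ yields the inequality between the two sums.

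For the equality I would exploit the explicit structure of standard bit mutation. Writing $Z^t=\beta n$, the string has $(1-\beta)n$ one-bits, the number of flipped zero-bits is $B:=\Bin(\beta n,\chi/n)$, and, independently, the number of flipped one-bits is $\Bin((1-\beta)n,\chi/n)$; the event $\mathcal E_i^t$ is exactly $\{B=i\}$. Conditioning on $\mathcal E_i^t$ fixes the zero-bit part of $F^t$ at $i$ and, by independence, leaves the one-bit part distributed as $\Bin((1-\beta)n,\chi/n)$, so $\E[F^t\mid Z^t=\beta n,\mathcal E_i^t]=i+(1-\beta)\chi$. Summing against $\Pr[\mathcal E_i^t]$ over $i\ge 2$ then splits the target quantity into $\sum_{i\ge 2} i\,\Pr[B=i]$ plus $(1-\beta)\chi\cdot\Pr[B\ge 2]$.

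The key step is to bound both pieces by the second factorial moment $\E[B(B-1)]=\beta n(\beta n-1)(\chi/n)^2\le \chi^2\beta^2$, which is exactly where the factor $\beta^2$ comes from. Since $i\le i(i-1)$ for $i\ge 2$, the first piece satisfies $\sum_{i\ge 2} i\,\Pr[B=i]\le \E[B(B-1)]\le \chi^2\beta^2$ (using that $B(B-1)=0$ for $B\in\{0,1\}$), while Markov's inequality applied to $\binom{B}{2}$ gives $\Pr[B\ge 2]\le \E[B(B-1)]/2\le \chi^2\beta^2/2$, so the second piece is at most $\chi^3\beta^2/2$. Adding these bounds shows the whole sum is at most $(\chi^2+\chi^3/2)\beta^2=O(\beta^2)$, with a constant depending only on $\chi$.

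I expect no serious technical obstacle, since the binomial sums are finitely supported and every estimate is elementary. The one point that must be handled with care is the one-bit contribution: naively it looks like a $\Theta(1)$ per-event cost and hence a potential $\Theta(1)$ term, and one must notice that it is multiplied by the quadratically small tail $\Pr[B\ge 2]$. Recognizing that the single quantity $\E[B(B-1)]$ simultaneously controls the truncated mean $\sum_{i\ge 2} i\,\Pr[B=i]$ and the tail probability $\Pr[B\ge 2]$ is what makes the final bound clean and uniform in $\beta$.
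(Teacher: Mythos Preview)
Your proposal is correct. The first inequality matches the paper's one-line argument exactly, and for the second part the paper simply cites a standard computation elsewhere, whereas you carry it out explicitly via the second factorial moment $\E[B(B-1)]\le\chi^2\beta^2$; this is precisely the kind of elementary calculation the citation points to, so your approach is essentially the same, just self-contained.
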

\begin{proof}
The first inequality is trivial since the change of $Z(x^{t})$ is bounded by the number of flipped bits. The second step is obtained by a standard computation that can for example~be found in the proof of~\cite[Lemma~2]{lengler2021runtime}.
\end{proof}

\section{Two-Dimensional Multiplicative Drift}

This section contains our main result in methodology, Theorem~\ref{thm:2Ddrift} below. First, we give a lemma which states some basic facts about the matrices that we are interested in. All vectors in this section are column vectors. 

\begin{lemma}\label{lem:defgamma}
Let $a,d > 0$ and $b,c < 0$, and consider the real $2\times 2$-matrix $A= \Big(\begin{matrix} a & b \\ c & d\end{matrix}\Big) \in \R^{2\times 2}$. Then the equation
\begin{align}\label{eq:defgamma}
c\gamma^2 + d\gamma = a\gamma + b 
\end{align}
has a unique positive root $\gamma_0>0$. The vector $e_1 := (\gamma_0,1)$ is an eigenvector of $A$ for the eigenvalue $\lambda_1 := c\gamma_0 + d$. The other eigenvalue is $\lambda_2 := a- c\gamma_0$ and has eigenvector $e_2 := (b,-c\gamma_0)$. The vector $e_1$ has two positive real entries, while the vector $e_2$ has a positive and a negative entry. Moreover, 
\begin{itemize}
\item[\textbullet] if $a\gamma_0+b > 0$, or equivalently $c\gamma_0+d >0$, then $\lambda_2 >  \lambda_1 >0$;
\item[\textbullet] if $a\gamma_0+b < 0$, or equivalently $c\gamma_0+d <0$, then $\lambda_2 >0 > \lambda_1$.
\end{itemize}
The values  $\gamma_0, \lambda_1,\lambda_2$ and the entries of $e_1$ and $e_2$ are analytic functions in $a,b,c,d$, and thus smooth in $a,b,c,d$.
\end{lemma}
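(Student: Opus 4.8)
The plan is to rewrite the root equation~\eqref{eq:defgamma} as the quadratic $g(\gamma) := c\gamma^2 + (d-a)\gamma - b = 0$ and to read off all the stated facts from it. For existence and uniqueness of a positive root I would argue via the product of the roots: the two roots of $g$ multiply to $-b/c$, which is negative because $b<0$ and $c<0$, so $g$ has exactly one positive and one negative root, giving $\gamma_0>0$ uniquely. (Equivalently, $g(0) = -b > 0$ while $g(\gamma)\to-\infty$ as $\gamma\to\infty$ since $c<0$, so a positive root exists, and the negative product of roots rules out a second one.)

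Next I would verify the eigenpairs by direct computation. For $e_1=(\gamma_0,1)$ we have $A e_1 = (a\gamma_0+b,\, c\gamma_0+d)$, and this equals $\lambda_1 e_1$ with $\lambda_1 = c\gamma_0+d$ precisely when $a\gamma_0+b = (c\gamma_0+d)\gamma_0$, which is exactly~\eqref{eq:defgamma} rearranged. The second eigenvalue then comes for free from the trace, $\lambda_2 = (a+d)-\lambda_1 = a-c\gamma_0$, and one checks $A e_2 = \lambda_2 e_2$ for $e_2 = (b,-c\gamma_0)$ by substituting~\eqref{eq:defgamma} once more in the second coordinate. The sign claims for the entries are then immediate: $\gamma_0,1>0$ give $e_1$ two positive entries, while $b<0$ and $-c\gamma_0>0$ give $e_2$ one entry of each sign.

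For the eigenvalue ordering, the clean observation is that $\lambda_2 = a - c\gamma_0 > 0$ \emph{always}, since $a>0$, $c<0$ and $\gamma_0>0$. The key algebraic step --- and the one non-routine point --- is to show $\lambda_2 > \lambda_1$ unconditionally. Using~\eqref{eq:defgamma} to eliminate $a-d$ (namely $(a-d)\gamma_0 = c\gamma_0^2 - b$), I would simplify $\lambda_2 - \lambda_1 = (a-d) - 2c\gamma_0$ to $-c\gamma_0 - b/\gamma_0$, which is positive because both $-c\gamma_0>0$ and $-b/\gamma_0>0$. Given $\lambda_2 > \lambda_1$ and $\lambda_2 > 0$, the two bullet points follow by splitting on the sign of $\lambda_1 = c\gamma_0+d$, and the stated equivalence ``$a\gamma_0+b>0 \Leftrightarrow c\gamma_0+d>0$'' is just the identity $a\gamma_0+b = \gamma_0(c\gamma_0+d)$ together with $\gamma_0>0$.

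Finally, for analyticity I would apply the analytic implicit function theorem to $g(\gamma,a,b,c,d)$. The discriminant of $g$ equals $(d-a)^2 + 4bc > 0$ on the whole region (because $bc>0$), so both roots are simple and $\partial g/\partial\gamma = 2c\gamma_0 + (d-a) \neq 0$ at $\gamma_0$; hence $\gamma_0$ depends analytically on $(a,b,c,d)$. Since $\lambda_1,\lambda_2$ and the entries of $e_1,e_2$ are polynomials in $\gamma_0,a,b,c,d$, they are analytic as well, and analytic functions are smooth. I expect the only genuinely delicate point to be the identity $\lambda_2-\lambda_1 = -c\gamma_0 - b/\gamma_0$; everything else is bookkeeping once~\eqref{eq:defgamma} is substituted.
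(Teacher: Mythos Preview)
Your proposal is correct and follows essentially the same route as the paper's proof: Vieta's rule for the unique positive root, direct verification of the eigenpairs via~\eqref{eq:defgamma}, sign analysis to obtain $\lambda_2>0$ and $\lambda_2>\lambda_1$, and analyticity of $\gamma_0$ as a simple root. The only cosmetic differences are that the paper bounds $\gamma_0\lambda_1 = a\gamma_0+b < a\gamma_0 < a\gamma_0 - c\gamma_0^2 = \gamma_0\lambda_2$ directly rather than via your identity $\lambda_2-\lambda_1 = -c\gamma_0 - b/\gamma_0$, and it argues analyticity from the explicit quadratic formula $\gamma_0 = x+\sqrt{y}$ with $y>0$ instead of invoking the implicit function theorem; both pairs of arguments are equivalent.
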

\begin{proof}
The roots of Equation~\eqref{eq:defgamma} are the roots of the quadratic polynomial $c\gamma^2 + (d-a)\gamma -b$. Its discriminant $(d-a)^2 +4bc > 0$ is positive, so it has two real roots. Since the product of the roots is $-b/c <0$, by Vieta's rule, they must have different signs. This proves existence and uniqueness of $\gamma_0$.

For the eigenvalues and -vectors, we check:
\begin{align}
Ae_1 = \left(\begin{matrix}a\gamma_0+b \\ c\gamma_0+ d\end{matrix}\right) \stackrel{\eqref{eq:defgamma}}{=}  \left(\begin{matrix}\gamma_0(c\gamma_0+d) \\ c\gamma_0+ d\end{matrix}\right) = \lambda_1 e_1,
\end{align}
and
\begin{align}
Ae_2 = \left(\begin{matrix}ab-bc\gamma_0 \\ bc- cd\gamma_0\end{matrix}\right) \stackrel{\eqref{eq:defgamma}}{=}  \left(\begin{matrix}b(a-c\gamma_0) \\ c(c\gamma_0^2- a\gamma_0)\end{matrix}\right) = \lambda_2 e_2.
\end{align}
Recalling $a,d > 0$ and $b,c < 0$, it is trivial to see that $\lambda_2 >0$ in all cases, $e_1$ has two positive entries, and $e_2$ has a positive and a negative entry. Since $\lambda_1 = c\gamma_0+d$, it has the same sign as $a\gamma_0+b$ by~\eqref{eq:defgamma}. Finally, by ~\eqref{eq:defgamma}, $\gamma_0\lambda_1 = a\gamma_0+ b < a\gamma_0 < a\gamma_0 -c\gamma_0^2 = \gamma_0\lambda_2$, which implies $\lambda_1 < \lambda_2$.

For being analytic, it suffices to observe that $\gamma_0$ can be written in the form $x+\sqrt{y}$, where $x,y$ are analytic functions in the parameters of \eqref{eq:defgamma}, and $y > 0$. Hence, $\gamma_0$ is analytic, and the remaining values are analytic functions in $\gamma_0$ and $a,b,c,d$.\qed
\end{proof}

The following theorem is our main result. It is concerned with the situation that we have two-dimensional state vectors, and the drift in state $x \in \R^2$ is approximatively given by $Ax$. There are a few complications that we need to deal with. Firstly, in our application, the drift does not look \emph{exactly} like that, but only approximatively, up to $(1\pm o(1))$ factors. 
Secondly, even in the positive case, in our application, we will only compute the drift if the number of one-bits is $o(n)$, since the computations would get much more complicated otherwise. These complications are reflected in the theorem.

In fact, these complications are rather typical. At least in the negative case $a\gamma_0 + b <0$, it is impossible for \emph{any} random process on a finite domain that the drift is given \emph{exactly} by $Ax$ everywhere. This is the same as in one dimension: it is impossible to have a negative drift throughout the whole of a finite domain; it can hold in a subset of the domain, but not everywhere. 

\begin{theorem}[Two-Dimensional Multiplicative Drift]\label{thm:2Ddrift}
Assume that for each $n\in \N$ we have a two-dimensional real Markov chain $(X^t)_{t \ge 0} = (X^t(n))_{t \ge 0}$ on $D \subseteq [0,n] \times [0,n]$, i.e., $X^t = (X_1^t,X_2^t)$ as column vector, where $X_1^t, X_2^t \in [0,n]$. Assume further that there is a (constant) real $2\times 2$-matrix $A= \Big(\begin{matrix} a & b \\ c & d\end{matrix}\Big) \in \R^{2\times 2}$ with $a,d >0$ and $b,c <0$, and that there are constants $\kappa,r > 0$ and a function $0 < \sigma =\sigma(n) \le 1$ with $ \sigma=\omega(\sqrt{\log n/n})$ such that $X^t$ satisfies the following conditions.
\begin{enumerate}
\item[A.] \emph{Two-dimensional linear drift.} For all $t \ge 0$ and all $x$ with $\|x\| \le \sigma n$, the drift at $X^t =x$ is $(1\pm o(1))A\cdot x/n$, by which we mean
\begin{align}\label{eq:o(1)-condition}
\E\left[X^t-X^{t+1} \mid X^t = x\right] = \left(\begin{matrix}(1\pm o(1))a\frac{x_1}{n} + (1\pm o(1))b\frac{x_2}{n} \\ (1\pm o(1))c\frac{x_1}{n} + (1\pm o(1))d\frac{x_2}{n} \end{matrix}\right),
\end{align}
where the $o(1)$ terms are uniform over all $t$ and $x$.
\item[B.] \emph{Tail bound on step size.} For all $i\geq 1$, $t\geq 0$, and for all $x = (x_1, x_2)\in D$,
\begin{align}
\Pr[\|X^t-X^{t+1}\| \geq i \mid X^t = x] \le \frac{\kappa}{(1+r)^i}.
\end{align}
\end{enumerate}
Let $\gamma_0$ be the unique positive root of Equation~\eqref{eq:defgamma}, and let $T$ be the hitting time of $(0,0)$, i.e., the first point in time when $X^T = (0,0)$. 

\begin{enumerate}[(a)]
\item If $a\gamma_0 +b > 0$ and $\|X^0\| = o(\sigma n)$, then $T = O(n\log n)$ with high probability.
\item If $a\gamma_0 +b < 0$ and $\|X^0\| \geq \sigma n$, then $T = e^{\Omega(\sigma^2 n)}$ with high probability.
\end{enumerate}
\end{theorem}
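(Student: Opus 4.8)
The plan is to reduce the two-dimensional process to a one-dimensional potential via the eigenvectors of $A$ computed in Lemma~\ref{lem:defgamma}, and then apply standard one-dimensional drift theorems. The footnote already signals the key design choice: in the positive case (a) we use a \emph{linear} potential, while the negative case (b) calls for a potential that witnesses negative drift in a region bounded away from the origin. I would treat the two parts separately, since the positive result wants fast convergence and the negative result wants exponential lower bounds, and these use different machinery (multiplicative/additive drift versus the Negative Drift Theorem).

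\textbf{Part (a).} Since $a\gamma_0+b>0$, Lemma~\ref{lem:defgamma} gives $\lambda_2>\lambda_1>0$ with the eigenvector $e_1=(\gamma_0,1)$ having \emph{both} entries positive. The natural potential is $g(x):=\langle w, x\rangle = w_1 x_1 + w_2 x_2$ for a suitable positive row vector $w$; the clean choice is the left eigenvector of $A$ for $\lambda_1$, or simply $w=(1,\gamma_0)$ scaled so that $w^\top A = \lambda_1 w^\top$ on the relevant cone. The point is that $g$ is a positive linear functional, so $g(x)=0$ iff $x=(0,0)$ (for $x$ in the nonnegative quadrant), and $g$ is within constant factors of $\|x\|$. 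First I would show that, ignoring the $o(1)$ errors, the drift of $g$ satisfies $\E[g(X^t)-g(X^{t+1})\mid X^t=x] = w^\top\!A\,x/n = \lambda_1\, g(x)/n$, i.e., exactly multiplicative drift with rate $\Theta(1/n)$. Then I would absorb the Condition~A error terms: because the $o(1)$ factors multiply entries of $Ax$ whose magnitudes are $O(\|x\|)=O(g(x))$, the total perturbation is $o(g(x)/n)$, so the drift of $g$ is still $(1\pm o(1))\lambda_1 g(x)/n \ge \frac{\lambda_1}{2}\, g(x)/n$ for large $n$. This holds only while $\|x\|\le\sigma n$, but starting from $\|X^0\|=o(\sigma n)$, the potential never needs to leave this region with high probability; one controls the (small) probability of a large upward jump using the exponential tail bound in Condition~B, which guarantees sub-exponential excursions. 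Applying the Multiplicative Drift Theorem (with the tail bound supplying the concentration) then yields $T=O(n\log n)$ with high probability, after checking that the minimal positive value of $g$ before hitting $(0,0)$ is $\Omega(1)$ so the $\log$ of the initial-over-final potential ratio is $O(\log n)$.

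\textbf{Part (b).} Here $a\gamma_0+b<0$ gives $\lambda_1<0<\lambda_2$, and now the eigenvector $e_2=(b,-c\gamma_0)$ (one positive, one negative entry) is the relevant direction. The strategy is to find a potential that has \emph{negative} drift (drift away from $(0,0)$, hence in our sign convention the potential should \emph{decrease} toward the optimum and we want to show it does not) in a band $\{\epsilon\sigma n \le \text{(potential)} \le \sigma n\}$, and then invoke the Negative Drift Theorem. I would build a one-dimensional potential $h(x)$ aligned with the eigenvalue structure so that along the flow of $A$ the component associated with $\lambda_1<0$ does not contract. Concretely, using the left eigenvector $u$ of $A$ for the \emph{negative} eigenvalue $\lambda_1$, the functional $u^\top x$ has drift $\lambda_1 u^\top x/n <0$ where it is positive, i.e., it is pushed \emph{upward} (away from zero), so it cannot reach $0$ quickly. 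The subtlety is that $u$ mixes signs, so $u^\top x$ is not automatically a valid (nonnegative) distance measure; I would restrict attention to the subcone where the dynamics concentrate and argue that the process, started with $\|X^0\|\ge\sigma n$, quickly enters and then stays in the region where this negative drift is in force, using Condition~B to bound the step size so that the Negative Drift Theorem applies. That theorem then gives a hitting time of $e^{\Omega(\sigma^2 n)}$: the width of the band is $\Theta(\sigma n)$ and the step-size tail is exponentially decaying with constant rate $r$, producing the exponent $\Omega(\sigma^2 n)$ (the $\sigma^2$, rather than $\sigma$, reflecting that a drift-versus-variance trade-off over a region of width $\sigma n$ with $O(1)$-variance steps gives an exponent quadratic in the rescaled width, cf.\ the $\sigma=\omega(\sqrt{\log n/n})$ assumption which exactly makes this superpolynomial).

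\textbf{Main obstacle.} The hard part, as the abstract's footnote confirms, is \emph{robustness under the $o(1)$ error terms in Condition~A}, especially in the negative case. A single clean eigenvector potential has drift exactly proportional to itself only for the unperturbed matrix; with entrywise $(1\pm o(1))$ errors the off-diagonal sign structure can in principle flip the sign of the drift near the boundary of the cone where the potential is small relative to its components. The delicate point is to choose the potential and the band so that the negative drift survives these perturbations uniformly, i.e., to show the perturbed drift of $h$ stays bounded away from zero (with the wrong sign for convergence) throughout the band, and that the process does not sneak to $(0,0)$ along a direction the one-dimensional potential fails to see. Handling this is exactly why the potential must be chosen linearly and the cone-confinement argument made carefully, rather than reusing a norm-based potential where, as the footnote notes, one is tempted to illegitimately interchange expectation and norm.
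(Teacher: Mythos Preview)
Your plan for part (a) is essentially the paper's approach: the potential is the linear functional given by the left eigenvector of $A$ for $\lambda_1$, which is the same as the first coordinate $\eta_1(x)$ of $x$ in the eigenbasis $\{e_1,e_2\}$. One small slip: $(1,\gamma_0)$ is \emph{not} that left eigenvector unless $a=d$; the correct one is proportional to $(-c\gamma_0,\,-b)=(|c|\gamma_0,|b|)$, as you can verify from $A^\top w=\lambda_1 w$ using $c\gamma_0^2+d\gamma_0=a\gamma_0+b$. With this $w$ your error-absorption argument goes through, and the paper additionally uses the Negative Drift Theorem (not just the step-size tail) to guarantee the process stays in the region $\|x\|\le \sigma n$ before applying multiplicative drift.

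The real gap is in part (b). You assert that the left eigenvector $u$ for $\lambda_1$ ``mixes signs'' and therefore plan a cone-confinement argument. This is false: by the computation above, $u\propto(|c|\gamma_0,|b|)$ has \emph{both entries positive}, regardless of the sign of $\lambda_1$. (It is the \emph{right} eigenvector $e_2$ for $\lambda_2$ that has mixed signs; the left eigenvector for $\lambda_1$ annihilates $e_2$ and inherits positive entries precisely because $e_2$ straddles the quadrant.) Consequently the paper uses the \emph{same} linear potential $f(x)=\eta_1(x)=u^\top x$ for both (a) and (b). In (b) one gets, after absorbing the $o(1)$ errors exactly as you did in (a),
\[
\E\big[f(X^{t+1})\mid X^t=x\big]\;\ge\;\Big(1+\tfrac{|\lambda_1|}{2n}\Big)f(X^t)
\]
whenever $\|x\|\le\sigma n$. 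Since $f(x)=\Theta(\|x\|)$ on the nonnegative quadrant, this gives a constant-order negative drift on a band of width $\Theta(\sigma n)$ in $f$, and the Negative Drift Theorem with Condition~B yields the $e^{\Omega(\sigma^2 n)}$ bound directly. No cone-confinement, no worry about ``sneaking to $(0,0)$ along an unseen direction'': $f$ is strictly positive on $[0,n]^2\setminus\{(0,0)\}$, so hitting the origin forces $f$ to cross the band. Your proposed workaround is unnecessary and, as stated, rests on a mistaken sign claim.
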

\begin{remark}\label{rem:sigma}
While we have included some complications in the statement of Theorem~\ref{thm:2Ddrift} to make it directly applicable to \TwoLin in Section~\ref{sec:twolin}, we have otherwise sacrificed generality to increase readability. Firstly, we restrict ourselves to the case that the drift is $(1\pm o(1))A\cdot x/n$. The scaling factor $1/n$ is quite typical for applications in EAs, but the machinery would work for other factors as well. Secondly, in many applications, including ours, the factors $x_1/n$ and $x_2/n$ reflect the probability of having any change at all, and \emph{conditional on changing} the drift is of order $\Theta(1)$. In this situation, the Negative Drift Theorem used in the proof can be replaced by stronger versions (see~\cite[Section~2.2]{rowe2014choice}), and we can replace the condition $\sigma = \omega(\sqrt{\log n/n})$ by the weaker condition $\sigma = \omega(1)$ for both parts (a) and (b). Moreover, part (b) then holds with a stronger bound of $e^{\Omega(\sigma n)}$ under the weaker condition $\|X^0\| = \omega(1)$. Finally, the step size condition B can be replaced by other conditions~\cite{kotzing2016concentration,lengler2020drift}.
\end{remark}
\begin{proof}[of Theorem~\ref{thm:2Ddrift}]
We start with some preparations that will be helpful for both (a) and (b). 
By Lemma~\ref{lem:defgamma}, the matrix $A$ has two different eigenvalues $\lambda_1, \lambda_2$ (where $\lambda_1$ may be positive or negative) with eigenvectors $e_1$ and $e_2$, where $e_1$ has two positive entries, while $e_2$ has a positive and a negative entry. In other words, the $2\times 2$-matrix $U$ whose columns are given by $e_1$ and $e_2$ satisfies $U^{-1}AU = \left(\begin{matrix}\lambda_1 & 0 \\ 0 & \lambda_2 \end{matrix}\right)$. 
Moreover, the eigenvalues and -vectors depend smoothly on $a,b,c,d$. Hence, changing $a,b,c,d$ by some additive term $\beta$ changes the eigenvalues and -vectors by $O(\beta)$ if $\beta$ is small.\footnote{This is because for any matrix $M$ of norm $1$ we can write $\lambda_1(A+\beta M) = \lambda_1(A) + \beta D{\lambda_1}(A)\cdot M + O(\beta^2)$, where the total differential $D\lambda_1(A)$ has bounded norm, and analogously for the other eigenvalues and -vectors.} In particular, writing $\beta = \delta/n$, for every $\eps > 0$ there exists $\delta >0$ such that every matrix $\tilde A$ with $\|A-\tilde A\| < \delta/n$ is invertible and has two different eigenvalues $\tilde \lambda_1, \tilde \lambda_2$ with eigenvectors $\tilde e_1, \tilde e_2$ respectively, such that
\begin{enumerate}[i)]
\item $|\lambda_1 - \tilde \lambda_1| < \eps/n$ and $|\lambda_2 - \tilde \lambda_2| < \eps/n$.
\item $\|e_1 - \tilde e_1\| < \eps/n$ and $\|e_2 - \tilde e_2\| < \eps/n$.
\item 
$\|U^{-1}-\tilde U^{-1}\| < \eps/(n\cdot \max\{\|U\|, \|\tilde U\|\})$, where $\tilde U$ is the matrix with columns $\tilde e_1$ and $\tilde e_2$.
\end{enumerate}
For the last point, note that $\|\tilde U\|$ is uniformly bounded by an absolute constant (depending on $a,b,c,d$) if $\delta/n$ is sufficiently small.

Let $x\in \R^2$, and let $\eta :=  \eta(x) := (\eta_1,\eta_2) = U^{-1}x$ and $\tilde \eta = \tilde \eta(x) := (\tilde \eta_1,\tilde \eta_2) := \tilde U^{-1}x$. In other words, we write $x$ in the basis $\{e_1,e_2\}$ by decomposing $x = \eta_1 e_1 + \eta_2 e_2$, and analogously for the basis $\{\tilde e_1,\tilde e_2\}$. Then we claim that iii) implies
\begin{align}\label{eq:epsdelta}
(1-\tfrac{\eps}{n})\|\eta\| \le \|\tilde \eta\| \le (1+\tfrac{\eps}{n})\|\eta\|.
\end{align} 
To check this, first note that  for the identity matrix $I \in \R^{2\times 2}$,
\begin{align}\label{eq:boundtildeU}
\|\tilde U^{-1} U- I\| \le \|U\| \cdot \|\tilde U^{-1}- U^{-1}\| \le \|U\| \cdot\tfrac{\eps}{n\|U\|} = \tfrac{\eps}{n}.
\end{align}
For any vector $v$, this implies
\begin{align}\label{eq:UUinverse}
\|\tilde U^{-1}U v - v\|  = \|(\tilde U^{-1}U-I)v\| \stackrel{\eqref{eq:boundtildeU}}{\le} \tfrac{\eps}{n}\|v\|.
\end{align} 
With $\tilde \eta = \tilde U^{-1}x = \tilde U^{-1}U \eta$, this implies for $v:= \eta$,
\begin{align}\label{eq:etaandtildeeta}
\|\tilde \eta - \eta\| \le \tfrac{\eps}{n}\|\eta\|,
\end{align}
and the right hand side of~\eqref{eq:epsdelta} follows from $\|\tilde \eta\| \le \|\eta\| + \|\tilde \eta -\eta\|$. Reversing the roles of $\eta$ and $\tilde \eta$, we also have 
$\|\eta\| \le (1+ \eps/n)\|\tilde \eta\|$, and multiplying with $(1-\eps/n)$ yields $(1-\eps/n)\|\eta\| \le (1- (\eps/n)^2)\|\tilde \eta\| \le \|\tilde \eta\|$, which is the left hand side of~\eqref{eq:epsdelta}. Finally, we note for later reference that by an analogous computation,~\eqref{eq:boundtildeU} and~\eqref{eq:UUinverse} also hold with $U$ and $\tilde U$ reversed, so for all vectors $v$,
\begin{align}\label{eq:UUinverse2}
\|U^{-1} \tilde U v- v\|  \le \tfrac{\eps}{n}\|v\|.
\end{align} \smallskip

We are now ready to define the potential function. For a vector $x$, recall that $\eta(x) = (\eta_1(x),\eta_2(x))$ is the corresponding vector in basis $\{e_1,e_2\}$. Then we define the potential of $x$ as $f(x) := \eta_1(x)$. In other words, $f$ is the unique linear function with $f(e_1)=1$ and $f(e_2)= 0$. To convince ourselves this choice makes sense, we will show in the following that
\begin{enumerate}
\item[(P1)] $f(x) > 0$ for all $x\in [0,n]^2 \setminus \{(0,0)\}$;
\item[(P2)] $f((0,0)) =0$;
\item[(P3)] there is a constant $\kappa_1>0$ such that $f(x) \ge \kappa_1 \|x\| $ for all $x\in [0,n]^2$;
\item[(P4)] there is a constant $\kappa_2>0$ such that $f(x) \le \kappa_2 \|x\|$ for all $x\in \R^2$.
\end{enumerate}
Note that (P1) and (P3) \emph{only} hold for $x$ in the first quadrant, i.e., for vectors $x$ that do not have negative entries. This will suffice for our purposes because the random variable $X^t$ is restricted to it. So let $x=(x_1,x_2)$ with $x_1,x_2 \geq 0$. Recall from Lemma~\ref{lem:defgamma} that $e_1 = (e_{11}, e_{12})$ has two positive entries, while $e_2 = (e_{21}, e_{22})$ has a negative entry $e_{21}<0$ and a positive entry $e_{22} >0$. By definition, $\eta$ satisfies the equation $x = \eta_1 e_1 + \eta_2 e_2$. Solving this equation for $\eta_1$ with Cramer's rule yields
\begin{align}
\eta_1 = \frac{e_{22}x_1-e_{21}x_2}{e_{11}e_{22}-e_{12}e_{21}} = \frac{e_{22}x_1+|e_{21}|x_2}{e_{11}e_{22}+e_{12}|e_{21}|}.
\end{align}
In the right expression, all factors and terms are non-negative, and the only factors which may be zero are $x_1$ and $x_2$. This shows (P1) and (P2). It also shows (P3) by setting $\kappa_1 := \min\{e_{22},|e_{21}|\}/(e_{11}e_{22}+e_{12}|e_{21}|) > 0$. We can see from the same formula that (P4) holds for all $x\in \R^2$ with $\kappa_2 := \max\{e_{22},|e_{21}|\}/(e_{11}e_{22}+e_{12}|e_{21}|)$, or simply observe that (P4) just repeats the well-known statement that any finite-dimensional linear function is a bounded operator.

A key property of the potential $f(X^t)$ is that it is a \emph{linear} function in $X^t$. In particular, it commutes with expectations, and we make use of this as follows. Let $x$ be a search point with $\|x\|\le \sigma n$, and assume that $n$ is so large that all $o(1)$ terms in~\eqref{eq:o(1)-condition} are at most $\delta$. Then by Condition A there is some matrix $\tilde A$ with $\|A-\tilde A\| < \delta$ such that the drift at $x$ is $\tilde A x/n$. Using the same notation as above, in particular $\tilde \lambda_1, \tilde \lambda_2, \tilde e_1, \tilde e_2$ for the eigenvalues and eigenvectors of $\tilde A$,  
we can rewrite this as
\begin{align}\label{eq:driftineigenbasis}
\begin{split}
\E\left[X^{t+1} \mid X^t = x\right] & = x - \tilde A x/n =  (I- \tilde A/n) \cdot (\tilde \eta_1 \tilde e_1 + \tilde \eta_2 \tilde e_2) \\
& = \big(1-\tfrac{\tilde \lambda_1}{n}\big) \tilde \eta_1 \tilde e_1 + \big(1-\tfrac{\tilde \lambda_2}{n}\big) \tilde \eta_2 \tilde e_2.
\end{split}
\end{align}
Since $f$ is a linear function, which commutes with expectations, we thus obtain
\begin{align}\label{eq:driftofpotential}
\E\left[f(X^{t+1}) \mid X^t = x\right] & = \big(1-\tfrac{\tilde \lambda_1}{n}\big) \tilde \eta_1 f(\tilde e_1) + \big(1-\tfrac{\tilde \lambda_2}{n}\big) \tilde \eta_2 f(\tilde e_2).
\end{align}
We first show that the second summand in~\eqref{eq:driftofpotential} has small absolute value, since this part will be identical for (a) and (b). Note that $\tilde \eta_2$ might be negative. Moreover, $f(\tilde e_2)$ might also be negative, since $\tilde e_2$ is not necessarily in the first quadrant, so (P1) does not apply. We use that $\tilde e_2 = \tilde U U^{-1} e_2 = e_2 - (I-\tilde U U^{-1})e_2$ and that $f$ is linear with $f(e_2)=0$. Since we want to achieve a bound in terms of $\eta_1 = f(x)$, we also use that by~\eqref{eq:epsdelta} we have $|\tilde \eta_2| \le \|\tilde \eta\| \le (1+\eps)\|\eta\|$, and we can continue $\|\eta\| = \|U^{-1}x\| \le \|U^{-1}\|\cdot\|x\| \leq \|U^{-1}\|\cdot \eta_1/\kappa_1$ by (P3). Thus, we can compute
 \begin{align}\label{eq:tildee2bound}
 \begin{split}
\big|(1-\tfrac{\tilde \lambda_2}{n}) \tilde \eta_2 f(\tilde e_2)\big| & = \big(1-\tfrac{\tilde \lambda_2}{n}\big) |\tilde \eta_2| \cdot \big|\big(0- f((I- \tilde U U^{-1})e_2) \big)\big|\\
&  \stackrel{\text{(P4)}}{\le} \big(1-\tfrac{\tilde \lambda_2}{n}\big) |\tilde \eta_2|\cdot \kappa_2 \cdot \|(I- \tilde U U^{-1})e_2\| \\
&  \stackrel{\eqref{eq:boundtildeU}}{\le} \big(1-\tfrac{\tilde \lambda_2}{n}\big) |\tilde \eta_2| \cdot \kappa_2\cdot \tfrac{\eps}{n}\|e_2\| \\
& \le \big(1-\tfrac{\tilde \lambda_2}{n}\big)\kappa_2\cdot (1+\eps) \|U^{-1}\|/\kappa_1\cdot \|e_2\| \cdot \tfrac{\eps}{n}\eta_1.
\end{split}
\end{align}
Thus, we can find a constant $C>0$ such that this term is at most $C\tfrac{\eps}{n}\eta_1$. For the other term in \eqref{eq:driftofpotential}, we finally treat part (a) and (b) of the theorem separately.\smallskip

\noindent \emph{(a).} 
In this case, both eigenvalues $\lambda_1$ and $\lambda_2$ are positive by Lemma~\ref{lem:defgamma}. Our goal is to bound the first summand in~\eqref{eq:driftofpotential}, and we use that $\tilde e_1 = \tilde U U^{-1} e_1 = e_1 - (I-\tilde U U^{-1})e_1$ and $f(e_1)=1$. Moreover, by~\eqref{eq:etaandtildeeta} and (P3), we have $|\tilde \eta_1 - \eta_1| \le \|\tilde \eta - \eta\| \le \tfrac{\eps}{n}\|\eta\| \le \tfrac{\eps}{n} \|U^{-1}\|\cdot \|x\| \le \tfrac{\eps}{n} \|U^{-1}\|\cdot \eta_1/\kappa_1$. This implies $\tilde \eta_1 \le (1+\tfrac{\eps}{n} \|U^{-1}\|/\kappa_1)\eta_1$. Hence,
 \begin{align}\label{eq:tildee1_bound}
 \begin{split}
\big(1-\tfrac{\tilde \lambda_1}{n}\big) \tilde \eta_1 f(\tilde e_1) & = \big(1-\tfrac{\tilde \lambda_1}{n}\big) \tilde \eta_1 \big(1- f((I- \tilde U U^{-1})e_1) \big) \\
& \stackrel{\eqref{eq:boundtildeU}, (P4)}{\le} \big(1-\tfrac{\tilde \lambda_1}{n}\big) \tilde \eta_1 \big(1+\tfrac{\eps}{n}\kappa_2 \|e_1\|\big) \\
& \le \big(1-\tfrac{\tilde \lambda_1}{n}\big)\left(1+\tfrac{\eps}{n}\kappa_2\|e_1\|\right)\big(1+\tfrac{\eps}{n} \|U^{-1}\|/\kappa_1\big)\eta_1\\
& \stackrel{\text{i)}}{\le} \big(1-\tfrac{\lambda_1 - \eps/n}{n}\big)\left(1+\tfrac{\eps}{n}\kappa_2\|e_1\|\right)\big(1+\tfrac{\eps}{n} \|U^{-1}\|/\kappa_1\big)\eta_1.
\end{split}
\end{align}
Thus, there exists a constant $C'>0$ such that the last term is at most $(1-\lambda_1/n + C'\eps/n)\cdot\eta_1$. In particular, by choosing $\eps >0$ sufficiently small (which we obtain by choosing $\delta$ sufficiently small), we can achieve that this expression is at most $(1-\lambda_1/(2n))\cdot \eta_1$, and that~\eqref{eq:tildee2bound} is at most $\lambda_1/(4n) \cdot \eta_1$. In this case, combining \eqref{eq:driftofpotential}, \eqref{eq:tildee2bound} and \eqref{eq:tildee1_bound}, and plugging in $\eta_1 = f(X^t)$ gives 
\begin{align}\label{eq:driftofpotential1}
\E\left[f(X^{t+1}) \mid X^t = x\right] & \le 1 -\tfrac{\lambda_1}{2n}\eta_1 + \tfrac{\lambda_1}{4n}\eta_1 = \big(1-\tfrac{\lambda_1}{4n}\big) f(X^t),
\end{align}
whenever $\|x\| \le \sigma n$.

Hence, the potential has multiplicative drift towards zero, as long as $\|X^t\| \le \sigma n$. Since $f(X^t) = \Theta(\|X^t\|)$ by (P3) and (P4), there is a constant $\nu >0 $ such that $f(X^t) \le \nu\sigma n$ implies $\|X^t\| \le \sigma n$. Thus,~\eqref{eq:driftofpotential} is applicable whenever $f(X^t) \le \nu\sigma n$. In particular, in the interval $[\nu\sigma n/2,\nu\sigma n]$, the potential has a downwards drift of order at least $\Theta(\sigma)$. The starting potential is below this interval as $f(X^0) = \Theta(\|X^0\|) = o(\sigma n)$, so by the Negative Drift Theorem~\cite[Theorem~2]{oliveto2015improved}, with high probability, the potential does not reach the upper boundary of this interval for at least $e^{\Omega(\sigma^2 n)}= \omega(n \log n)$ steps. Thus, with high probability, the random process remains in the region where \eqref{eq:driftofpotential1} holds, and by the Multiplicative Drift Theorem~\cite{doerr2012multiplicative,lengler2020drift}, with high probability, it reaches the optimum in $O(n \log n)$ steps.

\noindent \emph{(b).} Again, we want to bound the first summand in~\eqref{eq:driftofpotential}, but now we want a lower bound. The crucial difference is that $\lambda_1$ and $\tilde \lambda_1$ are now negative, and we will emphasize this by writing $-\tilde \lambda_1 = |\tilde \lambda_1|$. We use that $\tilde e_1 = \tilde U U^{-1} e_1 = e_1 - (I-\tilde U U^{-1})e_1$ and $f(e_1)=1$ again. As before, by~\eqref{eq:etaandtildeeta} and (P3), we have $|\tilde \eta_1 - \eta_1| \le \|\tilde \eta - \eta\| \le \tfrac{\eps}{n}\|\eta\| \le \tfrac{\eps}{n} \|U^{-1}\|\cdot \|x\| \le \tfrac{\eps}{n} \|U^{-1}\|\cdot f(x)/\kappa_1$. This time, we use it in the form $\tilde \eta_1 \ge (1-\tfrac{\eps}{n} \|U^{-1}\|/\kappa_1)\eta_1$. Hence, the first term in~\eqref{eq:driftofpotential} is
 \begin{align}\label{eq:tildee1_bound2}
 \begin{split}
\big(1+\tfrac{|\tilde \lambda_1|}{n}\big) \tilde \eta_1 f(\tilde e_1) & = \big(1+\tfrac{|\tilde \lambda_1|}{n}\big) \tilde \eta_1 \big(1- f((I- \tilde U U^{-1})e_1) \big) \\
& \ge \big(1+\tfrac{|\tilde \lambda_1|}{n}\big) \tilde \eta_1 \big(1-\tfrac{\eps}{n}\kappa_2\|e_1\|\big) \\
& \ge \big(1+\tfrac{|\tilde \lambda_1|}{n}\big)\left(1-\tfrac{\eps}{n}\|e_1\|\right)\big(1-\tfrac{\eps}{n} \|U^{-1}\|/\kappa_1\big)\eta_1.
\end{split}
\end{align}
Since $|\tilde \lambda_1| > |\lambda_1|-\eps/n$, we can find a constant $C'>0$ such that~\eqref{eq:tildee1_bound} is at least $1 +\tfrac{|\tilde \lambda_1|}{n}\eta_1 - C'\tfrac{\eps}{n}\eta_1$. Hence, if we choose $\eps >0$ sufficiently small and plug in $\eta_1 = f(X^t)$, then~\eqref{eq:driftofpotential} gives us 
\begin{align}\label{eq:driftofpotential2}
\E\left[f(X^{t+1}) \mid X^t = x\right] & \ge 1 +\tfrac{| \lambda_1|}{n}\eta_1 - C'\tfrac{\eps}{n}\eta_1 -C\tfrac{\eps}{n}\eta_1 \ge (1+\tfrac{|\lambda_1|}{2n}) f(X^t).
\end{align}
Thus, we have a negative drift, by which we mean a drift away from the optimum. Let $n$ be so large that all $o(1)$ terms in condition A are at most $\delta$. Then~\eqref{eq:driftofpotential2} holds for all $X^t$ with $X^t \le \sigma n$. Since $f(X^t)= \Theta(\|X^t\|)$ by (P3) and (P4), there is a constant $\nu$ such that a potential of $f(X^t) \le \nu \sigma n$ implies $\|X^t\| \le \sigma n$. Thus,~\eqref{eq:driftofpotential2} is applicable whenever $f(X^t) \le \nu\sigma n$. In particular, in the interval $[\nu\sigma n/2,\nu\sigma n]$, the potential has a negative drift of $\Omega(\sigma)$. By the Negative Drift Theorem~\cite[Theorem~2]{oliveto2015improved}, the time to decrease the potential from $\nu\sigma n$ to $\nu\sigma n/2$ is $e^{\Omega(\sigma^2 n)}$ with high probability, which concludes the proof.\qed
\end{proof}
%

\subsection{Interpretation and Generalization}\label{sec:interpretation}
We have given Lemma~\ref{lem:defgamma} and Theorem~\ref{thm:2Ddrift} without much explanation. Especially the term $a\gamma_0+b$ comes out of the blue. While the proofs show that this is indeed the correct term to consider, it does not necessarily give an intuitive understanding of the expression. In this section we will give exactly this: an approach which allows us to interpret the value $\gamma_0$ in a natural way. The discussion will outline an alternative proof that could be used to analyzed a two-dimensional problem more directly, without Theorem~\ref{thm:2Ddrift}. In fact, this is how we discovered Theorem~\ref{thm:2Ddrift} in the first place. 

%

Consider the situation of Theorem~\ref{thm:2Ddrift}, i.e., we are interested in the case that two random variables $X_1$ and $X_2$ each have positive drift by themselves, but influence each other negatively, and that the drift is linear in $X_1$ and $X_2$. In the notation of Theorem~\ref{thm:2Ddrift}, this corresponds to the case $a,d> 0$ and $b,c <0$ that is also considered there. As outlined in the introduction, this is a particularly interesting case. It means that when $X_1$ is much larger than $X_2$, the drift coming from $X_1$ dominates the drift coming from $X_2$. In particular, $X_1$ has a drift towards zero in this case, while $X_2$ has a drift away from zero. So in particular, $X_1$ and $X_2$ should approach each other. On the other hand, if $X_2$ is much larger than $X_1$, then the situation is reversed: $X_1$ increases in expectation, and $X_2$ decreases. But again, $X_1$ and $X_2$ should approach each other. That means that, if the ratio $X_1/X_2$ is large then it decreases in expectation, and when $X_1/X_2$ is small then it increases in expectation. In particular, for some value of $X_1/X_2$ the drift of $X_1/X_2$ changes from positive to negative.

This suggests the following ansatz. Let $\gamma := X_1/X_2$. We search for a value of $\gamma$ that is self-stabilizing. To this end, we consider the random variable $Y := X_1 - \gamma X_2$, compute the drift of $Y$ and then choose the value $\gamma_0$ for which the drift of $Y$ at $Y=0$ is zero. This leads precisely to Equation~\eqref{eq:defgamma}. By the considerations above, the drift of $Y$ is decreasing in $Y$. Hence, if the drift at $Y=0$ is zero, then the drift for positive $Y$ points towards zero, and the drift for negative $Y$ \emph{also} points towards zero.

\begin{figure}[ht]
\includegraphics[width=\textwidth]{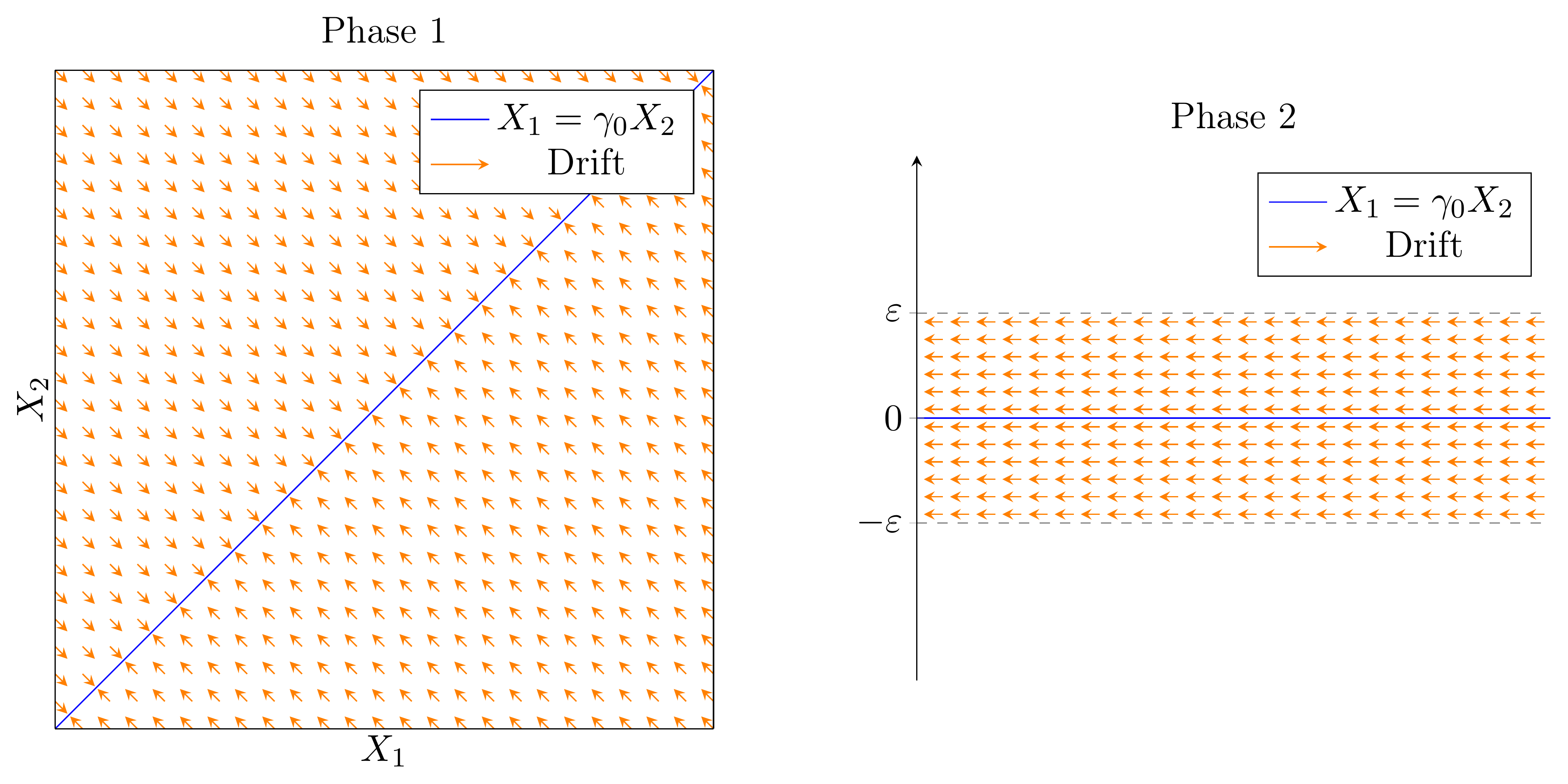}
\caption{Illustration of an alternative proof: in the first phase, the process has a drift towards the subspace defined by the equation $Y = X_1-\gamma_0 X_2 \stackrel{!}{=} 0$. Afterwards, the process stays close to this subspace for a long time, so only the drift close to this subspace is relevant.}\label{fig:phases}
\end{figure}

It is possible to turn this into an analysis as follows, see also Figure~\ref{fig:phases}. We divide the optimization process in two phases. In the first phase, we only consider $Y = X_1-\gamma_0 X_2$. We show that $Y$ approaches $0$ quickly (with the Multiplicative Drift Theorem), and stays very close to zero for a long time (with the Negative Drift Theorem). In the second phase, we analyze the random variable $X := X_1+X_2$. Since we know that $Y$ stays close to zero during this second phase, we do not need to analyze $X$ in the whole search space; it suffices to determine the drift in the subspace where $X_1 = \gamma_0 X_2$. This is a massive restriction, and the drift is reduced to $C\cdot X$ for some constant $C$ under this restriction. Actually, $C= a\gamma_0+b$, which explains why this term appears in Theorem~\ref{thm:2Ddrift}. Thus we can show convergence of $X$ with standard drift arguments, although the details become rather technical.

While this approach is rather cumbersome, it has the potential to work in larger generality than Theorem~\ref{thm:2Ddrift}. In the approach that we have just outlined, we do not actually need that the drift is linear. Essentially, we just need that the drift of $X_1$ is increasing in $X_1$ and decreasing in $X_2$, and vice versa for $X_2$.  This already guarantees a solution $\gamma_0$ (which may then depend on $X$) for the condition that the drift of $Y := X_1 - \gamma X_2$ at $Y=0$ should be zero, and it guarantees that $Y$ then has a drift pushing it towards zero from both directions. Thus there is no fundamental need for linear drift. If the drift is a non-linear function, then we can still define $Y:= X_1-\gamma X_2$ and search for values of $\gamma,X_1,X_2$ such that $Y=0$ and the drift of $Y$ is zero at the same time. Since we have three variables and two equations, this typically cuts out a one-dimensional subspace for $(X_1,X_2)$, though in general not a linear subspace. But it may still be possible to show that in a first phase, the process converges to this one-dimensional subspace, thus effectively reducing the number of dimensions by one. 

Even in dimension larger than two, we can use this argument to reduce the dimension whenever we can find a one-dimensional function $Y$ of $X$ that has drift towards zero from both sides, since then the random process approaches the $Y=0$ subspace. In the linear case of Theorem~\ref{thm:2Ddrift}, such a function $Y$ can be found in terms of eigenvectors: formally, it is the projection onto eigenvectors of negative eigenvalues of the drift matrix. We leave the exploration of non-linear drift and of higher-dimensional linear drift to future work.


\section{The $(1+1)$-EA on \TwoLins}\label{sec:twolin}

In this section, we analyze the runtime of the $(1+1)$-EA on \TwoLins. The main contribution of this section is Theorem~\ref{thm:main_EA}. This is an application of the methods developed in the previous section. In Section~\ref{sec:sym} below, we will give more precise results for the special case $\rho=\ell=.5$, as well as the Domination Lemma that holds for arbitrary $\rho$ and $\ell$. For readability, we will assume in the following that $\ell n \in \N$. 
We denote by $X_L^t$ and $X_R^t$ the number of zero bits in the left and right part of the $t$-th search point $x^{t}$, respectively, and let $X^t := (X_L^t,X_R^t)$.  It only remains to compute the two-dimensional drift of $X^t$, for which we give the following definition.

\begin{definition}\label{def:driftmain}
Let $\chi >0$ and $\rho,\ell\in [0,1]$. Let $A = \left(\begin{matrix}a & b \\ c & d\end{matrix}\right) \in \R^{2\times 2}$, where
\begin{eqnarray}
\begin{split}\label{eq:abcd}
a & = \rho\chi e^{-\ell \chi}+(1-\rho)\chi e^{-\chi}, &  \qquad b & = - (1-\rho)\ell \chi^2e^{-(1-\ell)\chi},\\
c & = -\rho(1-\ell)\chi^2e^{-\ell \chi}, &  \qquad d & = (1-\rho)\chi e^{-(1-\ell )\chi}+\rho\chi e^{-\chi}.
\end{split}
\end{eqnarray}
\end{definition}

The reason for the definition of $a,b,c,d$ and $A$ is the following proposition, which states that the two-dimensional drift is then given by $(1\pm o(1)) Ax/n$.

\begin{proposition}\label{prop:driftmain}
Let $\chi >0$ and $\rho,\ell\in [0,1]$. Let $x_L \in [\ell n]$ and $x_R \in [(1-\ell)n]$, and $x=(x_L,x_R)$ as column vector. If $x_L+x_R = o(n)$, then
\begin{align*}
\E[X^t-X^{t+1} \mid X^t = x] =  \frac{1}{n}\left(\begin{matrix} (1\pm o(1))\cdot a &\ (1\pm o(1))\cdot b \\ (1\pm o(1))\cdot c &\ (1\pm o(1))\cdot d\end{matrix}\right)\cdot x =: \frac{1\pm o(1)}{n}Ax, 
\end{align*}
where $a,b,c,d$ and $A$ are given by Definition~\ref{def:driftmain}.
\end{proposition}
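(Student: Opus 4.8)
The plan is to condition on the function $f^t\in\{f_1,f_2\}$ drawn in generation $t$ and to use the law of total expectation, so that $\E[X^t-X^{t+1}\mid X^t=x]=\rho\,\E[\,\cdot\mid X^t=x,f^t=f_1]+(1-\rho)\,\E[\,\cdot\mid X^t=x,f^t=f_2]$, and to compute the two-dimensional drift under each function separately. Write $G_L,G_R$ for the number of zero-bits that flip to one in the left and right part, and $W_L,W_R$ for the number of one-bits that flip to zero. Then on accepted steps $X_L^t-X_L^{t+1}=G_L-W_L$ and $X_R^t-X_R^{t+1}=G_R-W_R$ (both are $0$ on rejected steps), while the change in fitness is $n(G_L-W_L)+(G_R-W_R)$ under $f_1$ and $(G_L-W_L)+n(G_R-W_R)$ under $f_2$. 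I would organize the computation around the number $k:=G_L+G_R$ of \emph{improving} flips, since this is exactly the quantity $i$ controlled by Lemma~\ref{lem:several_flips}.

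Two of the three cases are immediate. If $k=0$ then no zero-bit becomes a one-bit, so the offspring is never strictly fitter, and it is strictly less fit whenever any one-bit flips (all weights are positive); hence the step is rejected or leaves $x$ unchanged, and contributes nothing. If $k\ge 2$, then since $|X_L^t-X_L^{t+1}|$ and $|X_R^t-X_R^{t+1}|$ are both at most the total number $F$ of flipped bits, the contribution of these events to each coordinate of the drift is at most $\sum_{i\ge2}\E[F\mid X^t=x,\mathcal E_i^t]\Pr[\mathcal E_i^t]=O(\beta^2)$ with $\beta=(x_L+x_R)/n$, by Lemma~\ref{lem:several_flips}. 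As $x_L+x_R=o(n)$ this is $o((x_L+x_R)/n)$ and is absorbed into the $(1\pm o(1))$ factors. All the content therefore lies in the case $k=1$.

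For $k=1$ I split according to whether the single improving flip lies in the left or in the right part and evaluate acceptance under each function. Consider $f_1$, where the left part carries weight $n$ (``dominant'') and the right part weight $1$ (``subordinate''). \emph{(i) Improving flip in the dominant part:} the gain $\approx n$ makes the step accepted unless a left one-bit also flips, so up to lower-order terms the relevant event is $\{G_L=1,\,W_L=0\}$, of probability $(1\pm o(1))\,x_L(\chi/n)e^{-\ell\chi}$ (using $(1-\chi/n)^{\ell n}=(1\pm o(1))e^{-\ell\chi}$). It changes $X_L$ by $+1$, giving the diagonal term $\chi e^{-\ell\chi}\,x_L/n$, and it changes $X_R$ by $-W_R$; since $W_R\sim\mathrm{Bin}((1-\ell)n-x_R,\chi/n)$ is independent with mean $(1\pm o(1))(1-\ell)\chi$, this yields the off-diagonal term $-(1-\ell)\chi^2e^{-\ell\chi}\,x_L/n$, the source of $c$. \emph{(ii) Improving flip in the subordinate part:} the step is now accepted only if the left part is unchanged, so the dominant contributing event is $\{G_R=1,\,W_L=0,\,W_R=0\}$, of probability $(1\pm o(1))\,x_R(\chi/n)e^{-\chi}$, contributing $\chi e^{-\chi}\,x_R/n$ to the $X_R$-drift. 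Weighting these by $\rho$, adding the analysis under $f_2$ (obtained by exchanging left $\leftrightarrow$ right, $\ell\leftrightarrow1-\ell$ and $\rho\leftrightarrow1-\rho$) weighted by $1-\rho$, and collecting the coefficients of $x_L$ and $x_R$ reproduces exactly the entries $a,b,c,d$ of Definition~\ref{def:driftmain}.

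The main obstacle is the bookkeeping in the case $k=1$, and especially the off-diagonal terms, which do not arise from a single clean one-flip event. The coefficient $c$ (and symmetrically $b$ under $f_2$) comes from an improving flip in the part that dominates the active function, occurring \emph{together with} an expected $\approx(1-\ell)\chi$ simultaneous worsening flips in the other part; one must therefore compute $\E[W_R]$ (resp.\ $\E[W_L]$) rather than the probability of a single flip, and argue that acceptance is governed entirely by the dominant part up to lower-order events. Throughout one has to verify that the binomial factors equal the stated exponentials up to $(1\pm o(1))$ uniformly in all $x$ with $x_L+x_R=o(n)$ (so that, e.g., $(1-\chi/n)^{x_R}=1\pm o(1)$), and that both the $O(\beta^2)$ term and the $o(1)$ corrections to the single-flip probabilities are of smaller order than the leading coefficients; this holds because $a,d=\Theta(1)$ and, for $\rho,\ell\in(0,1)$, also $b,c=\Theta(1)$.
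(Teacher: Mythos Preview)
Your proof is correct and follows essentially the same approach as the paper: condition on which function is drawn, invoke Lemma~\ref{lem:several_flips} to discard all events with $\ge 2$ zero-bit flips as $O(\beta^2)$, and then carefully analyze the single zero-bit flip case under each function, obtaining the diagonal entries from the ``flip a single zero-bit and nothing that blocks acceptance'' events and the off-diagonal entries from the expected number of collateral one-bit flips in the subordinate part. The only organizational difference is that the paper slices first by the pair $(i,j)$ of zero- and one-bit flips \emph{in the left part} (which, under $f_1$, directly determines acceptance), whereas you slice first by the total number $k=G_L+G_R$ of zero-bit flips and then by their location; both routes isolate the same dominant events and arrive at the same coefficients.
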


We will prove the proposition at the end of this section. First, we give our main result for the \ooea on \TwoLins, which now follows easily from Theorem~\ref{thm:2Ddrift}. We note that, as explained in Remark~\ref{rem:sigma}, the bound $n^{1/2+\Omega(1)}$ in the second case is not tight and could be strengthened.

 \begin{theorem}\label{thm:main_EA}
Let $\chi >0$, $\rho, \ell \in [0,1]$, and consider the \ooea with mutation rate $\chi/n$ on \TwoLins. Let $a,b,c,d$ as in Definition~\ref{def:driftmain}, and let $\gamma_0$ be the unique root of $c\gamma^2 + d\gamma = a\gamma + b$ as in~\eqref{eq:defgamma}. Let $T$ be the hitting time of the optimum.
\begin{enumerate}
\item[(a)] Assume that $a\gamma_0 +b >0$ and that the \ooea is started with $o(n)$ zero-bits. Then $T = O(n\log n)$ with high probability.
\item[(b)] Assume that $a\gamma_0 +b <0$ and that the \ooea is started with $n^{1/2+\Omega(1)}$ zero-bits. Then $T$ is superpolynomial with high probability.
\end{enumerate}
\end{theorem}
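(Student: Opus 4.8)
The plan is to derive Theorem~\ref{thm:main_EA} as a direct application of the two-dimensional drift Theorem~\ref{thm:2Ddrift}, using Proposition~\ref{prop:driftmain} to verify the hypotheses. The state of the process is $X^t = (X_L^t, X_R^t)$, the pair of zero-bit counts in the left and right parts, which lives in $[0,\ell n]\times[0,(1-\ell)n] \subseteq [0,n]^2$, exactly the domain required by Theorem~\ref{thm:2Ddrift}. So the strategy is simply to check Conditions~A and~B and then invoke parts (a) and (b) of the general theorem with the matrix $A$ from Definition~\ref{def:driftmain}.

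First I would establish Condition~A. Proposition~\ref{prop:driftmain} states precisely that whenever $x_L + x_R = o(n)$ we have $\E[X^t - X^{t+1}\mid X^t = x] = \tfrac{1\pm o(1)}{n}Ax$, which is the form~\eqref{eq:o(1)-condition}. To match the theorem's quantifier ``for all $x$ with $\|x\|\le \sigma n$,'' I need to fix a function $\sigma = \sigma(n)$ with $\|x\|\le \sigma n \Rightarrow x_L+x_R = o(n)$; any $\sigma = \sigma(n) = o(1)$ with $\sigma = \omega(\sqrt{\log n/n})$ works, for instance $\sigma(n) = n^{-1/4}$, since then $\|x\|\le \sigma n$ gives $x_L + x_R \le 2\sigma n = o(n)$. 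Note $A$ has the correct sign pattern $a,d>0$ and $b,c<0$ by inspecting~\eqref{eq:abcd} (all the exponential factors are positive), so Lemma~\ref{lem:defgamma} applies and $\gamma_0$ is well-defined. Next I would verify Condition~B, the tail bound on the step size. Since $\|X^t - X^{t+1}\| \le F^t$, the total number of flipped bits, and $F^t$ is dominated by a $\Bin(n,\chi/n)$ variable (stochastically bounded by a Poisson-like tail), the probability of flipping at least $i$ bits decays geometrically, giving $\Pr[\|X^t-X^{t+1}\|\ge i] \le \kappa/(1+r)^i$ for suitable constants $\kappa, r$ depending only on $\chi$. This is a standard Chernoff/binomial tail estimate.

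With both conditions in hand, the two cases follow immediately. For part (a), the hypothesis $a\gamma_0+b>0$ and the starting condition of $o(n)$ zero-bits translate to $\|X^0\| = o(\sigma n)$ after possibly shrinking $\sigma$ to sit between the start and the $o(n)$ threshold, so Theorem~\ref{thm:2Ddrift}(a) gives $T = O(n\log n)$ whp. For part (b), the hypothesis $a\gamma_0+b<0$ together with a start of $n^{1/2+\Omega(1)}$ zero-bits gives $\|X^0\|\ge \sigma n$ provided we choose $\sigma = n^{-\Omega(1)}$ small enough (e.g.\ matching the $n^{1/2+\Omega(1)}$ start so that $\sigma n \le \|X^0\|$ while retaining $\sigma = \omega(\sqrt{\log n/n})$); then Theorem~\ref{thm:2Ddrift}(b) yields $T = e^{\Omega(\sigma^2 n)}$, which is superpolynomial (indeed $n^{\omega(1)}$) as long as $\sigma^2 n = \omega(\log n)$, i.e.\ $\sigma = \omega(\sqrt{\log n/n})$.

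I expect the only genuine subtlety to be the bookkeeping around $\sigma$: the general theorem uses a single cutoff $\sigma$ both to delimit where the linear-drift approximation holds (Condition~A) and to locate the starting point relative to the drift barrier (the conditions $\|X^0\| = o(\sigma n)$ in (a) and $\|X^0\|\ge \sigma n$ in (b)). The main obstacle is thus choosing $\sigma$ consistently so that it simultaneously (i)~keeps $\|x\|\le\sigma n$ inside the regime $x_L+x_R=o(n)$ where Proposition~\ref{prop:driftmain} applies, (ii)~satisfies $\sigma = \omega(\sqrt{\log n/n})$ to make the failure probability $e^{\Omega(\sigma^2 n)}$ superpolynomial, and (iii)~straddles the given start in the right direction for each part. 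In part (b) the start $n^{1/2+\Omega(1)}$ is what forces the $\omega(\sqrt{n\log n})$ threshold and hence the superpolynomial (rather than fully exponential) bound, matching the remark after the theorem statement that this bound is not tight. Everything else is a mechanical translation, so the real work has already been done in Theorem~\ref{thm:2Ddrift} and in proving Proposition~\ref{prop:driftmain}.
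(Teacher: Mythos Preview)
Your proposal is correct and follows essentially the same route as the paper: verify Condition~A via Proposition~\ref{prop:driftmain}, Condition~B via the binomial tail on the number of flipped bits, and then invoke Theorem~\ref{thm:2Ddrift}(a) and~(b) with a suitably chosen $\sigma$. The paper makes the $\sigma$-bookkeeping explicit by taking $\sigma := \max\{n^{-1/4},\sqrt{x/n}\}$ in part~(a) and $\sigma := \sqrt{x}\,n^{-3/4}$ in part~(b); note that in~(a) one must \emph{enlarge} rather than shrink $\sigma$ when the start $x$ exceeds $n^{3/4}$, so that $x = o(\sigma n)$ still holds.
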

\begin{proof}[of Theorem~\ref{thm:main_EA}]
We need to check that Theorem~\ref{thm:2Ddrift} is applicable. For (a), assume that the algorithm starts with $x = o(n)$ zero-bits. We choose a function $\sigma = \sigma(n)$ such that $x = o(\sigma n) = o(n)$. Moreover, we require $\sigma = \omega(\sqrt{\log n/n})$. For concreteness, we may set $\sigma := \max\{n^{-1/4}, \sqrt{x/n}\}$. Then condition A (two-dimensional linear drift) holds by Proposition~\ref{prop:driftmain}. Condition B (tail bound on step size) holds since the number of bit flips per generation satisfies such a tail bound for any constant $\chi$. Thus the claim of (a) follows. For (b), we choose $\sigma:=\sqrt{x}n^{-3/4} = n^{-1/2+\Omega(1)}$. Since $x \ge \sqrt{x} n^{1/4} = \sigma n$, Theorem~\ref{thm:2Ddrift} applies and gives that with high probability $T = e^{n^{\Omega(1)}}$. \qed 
\end{proof}

Although it is not difficult to write down an explicit formula for $\gamma_0$ and for the expression $a\gamma_0+b$, the formula is so complicated that we refrain from giving it here. We suspect that for all $\rho,\ell \in (0,1)$ there is a threshold $\chi_0$ such that $a\gamma_0 +b >0$ for all $\chi<\chi_0$ and $a\gamma_0 +b <0$ for all $\chi>\chi_0$, but we couldn't deduce it easily from the explicit formula. We will show this in Section~\ref{sec:sym} for the symmetric case $\rho=\ell = .5$. For the general case, we instead only give the following, slightly weaker corollary.
\begin{corollary}\label{cor:general_case}
Consider the setting of Theorem~\ref{thm:main_EA}. For all $\rho,\ell \in (0,1)$ there are $\chi_1, \chi_2 >0$ such that $a\gamma_0 +b >0$ for all $\chi<\chi_1$ and $a\gamma_0 +b <0$ for all $\chi>\chi_2$. 
\end{corollary}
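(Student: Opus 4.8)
The plan is to avoid the explicit (and very unwieldy) formula for $\gamma_0$ altogether, by noting that only the \emph{sign} of $a\gamma_0+b$ matters, and that this sign is encoded in the determinant of $A$. Lemma~\ref{lem:defgamma} produces two real eigenvalues $\lambda_1,\lambda_2$ with $\lambda_2>0$ in all cases, and states that $a\gamma_0+b>0 \iff \lambda_1>0$ and $a\gamma_0+b<0 \iff \lambda_1<0$, i.e.\ $\operatorname{sign}(a\gamma_0+b)=\operatorname{sign}(\lambda_1)$. Since $\det A=\lambda_1\lambda_2$ and $\lambda_2>0$, this gives
\begin{align*}
\operatorname{sign}(a\gamma_0+b)=\operatorname{sign}(\lambda_1)=\operatorname{sign}(\det A)=\operatorname{sign}(ad-bc).
\end{align*}
So the corollary reduces to showing that $ad-bc>0$ for all small $\chi$ and $ad-bc<0$ for all large $\chi$, with $\gamma_0$ eliminated entirely.

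Next I would compute $ad-bc$ from Definition~\ref{def:driftmain} and isolate a scalar whose sign is transparent. A short expansion gives $bc=\rho(1-\rho)\ell(1-\ell)\chi^4 e^{-\chi}$, and multiplying out the two factors of $ad$ shows that every summand of $ad-bc$ carries the common positive factor $\chi^2 e^{-\chi}$. Factoring it out, for $\chi>0$ the sign of $ad-bc$ equals the sign of
\begin{align*}
G(\chi):=\rho(1-\rho)+\rho^2 e^{-\ell\chi}+(1-\rho)^2 e^{-(1-\ell)\chi}+\rho(1-\rho)e^{-\chi}-\rho(1-\rho)\ell(1-\ell)\chi^2.
\end{align*}

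The final step is an elementary one-variable analysis of $G$. At $\chi=0$ the four leading terms collapse to $\rho^2+2\rho(1-\rho)+(1-\rho)^2=1$ and the last term vanishes, so $G(0)=1>0$; by continuity $G(\chi)>0$, hence $a\gamma_0+b>0$, on some interval $(0,\chi_1)$. As $\chi\to\infty$ the three exponential terms vanish, the constant $\rho(1-\rho)$ stays bounded, and $-\rho(1-\rho)\ell(1-\ell)\chi^2\to-\infty$ because $\rho,\ell\in(0,1)$ forces $\rho(1-\rho)\ell(1-\ell)>0$. Thus $G(\chi)\to-\infty$, so $G(\chi)<0$ and $a\gamma_0+b<0$ for all $\chi$ beyond some $\chi_2$, which is exactly the claim.

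I do not anticipate a genuine obstacle: once the determinant reduction is in place, everything left is a routine limit computation. The one step that genuinely requires the right idea is that reduction itself. Without it one is tempted to estimate $\gamma_0$ asymptotically and then evaluate $c\gamma_0+d$ directly, which is feasible but forces a careful comparison of the three decay rates $e^{-\ell\chi}$, $e^{-(1-\ell)\chi}$ and $e^{-\chi}$ and becomes awkward in the asymmetric regime $\ell\neq 1/2$. Replacing $a\gamma_0+b$ by $\det A$ makes $\gamma_0$ disappear and reduces the whole statement to the manifestly tractable function $G$.
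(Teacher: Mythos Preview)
Your proof is correct and takes a genuinely different route from the paper's own argument. The key identity $\operatorname{sign}(a\gamma_0+b)=\operatorname{sign}(\det A)$ is valid by Lemma~\ref{lem:defgamma} (both eigenvalues real, $\lambda_2>0$ always, and $\lambda_1$ shares the sign of $a\gamma_0+b$), and your computation of $ad-bc=\chi^2 e^{-\chi}G(\chi)$ and the limits $G(0)=1$, $G(\chi)\to-\infty$ all check out.

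Interestingly, the paper does precisely what you describe as the less attractive alternative: it estimates $\gamma_0$ asymptotically from the defining quadratic~\eqref{eq:defgamma} in both limits $\chi\to 0$ and $\chi\to\infty$, and then plugs these asymptotics into $a\gamma_0+b$. For $\chi\to 0$ it shows $\gamma_0=\omega(\chi)$ via a contradiction argument on~\eqref{eq:cor:general_case}; for $\chi\to\infty$ it invokes the $(\rho,\ell)\leftrightarrow(1-\rho,1-\ell)$ symmetry to assume $\ell\ge 1/2$, and then handles the cases $\ell>1/2$ and $\ell=1/2$ separately because the dominant exponential scales differ. Your determinant reduction sidesteps all of this: $\gamma_0$ never appears, no symmetry reduction or case split on $\ell$ is needed, and the entire statement collapses to the behaviour of a single explicit function $G$. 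The paper's approach has the minor advantage of giving some quantitative control on $\gamma_0$ itself, but for the corollary as stated your argument is cleaner and shorter.
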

\begin{proof}
Fix $\rho,\ell \in (0,1)$. We will study the asymptotics of $\gamma_0$ and $a,b,c,d$ in the limits $\chi \to 0$ and $\chi \to \infty$. We start with $\chi \to 0$, which implies $e^{-\chi}, e^{-\ell \chi}, e^{-(1-\ell) \chi} = 1-o(1)$ . For this limit, we have $a= (1\pm o(1))\chi$, $b=-\Theta(\chi^2)$, $c=-\Theta(\chi^2)$ and $d=(1\pm o(1))\chi$. We write the defining Equation~\eqref{eq:defgamma} as $c\gamma_0^2- b = (a-d)\gamma_0$, or equivalently
\begin{align}\label{eq:cor:general_case}
1 = \frac{(a-d)\gamma_0}{c\gamma_0^2- b}.
\end{align}
To understand the asymptotics of $\gamma_0$ (which is a function of $\chi$), fix a constant $C>0$. Assume for the sake of contradiction that there are arbitrarily small values of $\chi$ for which $\gamma_0 \le C\chi$. Let us examine the right hand side of~\eqref{eq:cor:general_case} as $\chi$ approaches $0$ with such values. The denominator is $-\Theta(\chi^2\gamma_0^2) + \Theta(\chi^2) = -O(\chi^4) + \Theta(\chi^2) = \Theta(\chi^2)$. However, since $a-d = \pm o(\chi)$ the numerator is $\pm o(\chi\gamma_0) = \pm o(\chi^2)$, which contradicts the fact that the fraction is constant. Hence, $\gamma_0 > C\chi$ holds for all sufficiently small values of $\chi$. Since this holds for all $C$, we have shown $\gamma_0 = \omega(\chi)$ for $\chi\to 0$, and consequently $a\gamma_0 = \omega(\chi^2)$. Since $b = -\Theta(\chi^2)$, we obtain  $a\gamma_0 +b > 0$ for sufficiently small $\chi$. This shows the first part of the corollary.

For the second part, we study the limit $\chi \to \infty$. The process is symmetric under exchanging the left and right part of the string, which replaces $(\rho,\ell)$ by $(1-\rho,1-\ell)$. Therefore, we may assume $\ell \ge 1/2$, since the other case is covered by symmetry. This assumption implies $e^{-\ell\chi} = O(e^{-(1-\ell)\chi})$, and we also have $e^{-\chi} = o(e^{-\ell\chi})$, which implies $a = \Theta(\chi e^{-\ell\chi}) = O(d)$. In the strict case $\ell > 1/2$ we even have $a = o(d)$. We write the defining Equation~\eqref{eq:defgamma} as 
\begin{align}\label{eq:cor:general_case_2}
\gamma_0^2 + \frac{d-a}{c} \cdot \gamma_0 = \frac{b}{c}.
\end{align}
We compute the asymptotics of the involved terms for. The right hand side is $b/c = \Theta(e^{(2\ell-1)\chi})$. The fraction on the left is $(d-a)/c = \pm O(d/c) = \pm O(e^{(2\ell-1)\chi}/\chi)$, and we may replace the $\pm O$ by $+\Theta$ if $\ell > 1/2$.  In this latter case, omitting the positive term $\gamma_0^2$ in~\eqref{eq:cor:general_case_2} gives $(d-a)/c \cdot \gamma_0 \le b/c$, which implies $\gamma_0 \le (b/c) /((b-a)/c) = O(\chi)$. 
Consequently, in this case $a\gamma_0 = O(\chi^2e^{-\ell\chi}) = o(|b|)$. So $a\gamma_0 +b < 0$ if $\chi$ is sufficiently large. 

In the remaining case $\ell = 1/2$, the asymptotics simplify to $b/c = \Theta(1)$ and $(d-a)/c = \pm O(1/\chi) = \pm o(1)$, so~\eqref{eq:cor:general_case_2} implies $\gamma_0 = \Theta(1)$. Hence,  $a\gamma_0 = \Theta(\chi e^{-\chi/2})$ and $b = -\Theta(\chi^2e^{-\chi/2})$, so again $a\gamma +b < 0$ if $\chi$ is sufficiently large.\qed
\end{proof}

It remains to prove Proposition~\ref{prop:driftmain}. The proof consists mostly of rather simple calculations, which we keep brief.

\begin{proof}[of Proposition~\ref{prop:driftmain}]
Throughout this proof, we write $\beta_L \coloneqq x_L/n$, $\beta_R \coloneqq x_R/n$, $\beta := \beta_L+\beta_R$, and $\Delta = (\Delta^L,\Delta^R) := \E[X^t-X^{t+1} \mid X^t = x]$. 
We first consider the case where the random environment $f^t$ is given by $f_1$ in the $(t+1)$-st iteration, which happens with probability $\rho$. We denote the drift in this case by $\Delta^L_1$ and $\Delta^R_1$, and similarly for $f_2$, so that 
\begin{align}\label{eq:del_combine}
\Delta^L = \rho\Delta^L_1 + (1-\rho)\Delta^L_2 \qquad \text{ and } \qquad \Delta^R = \rho\Delta^R_1 + (1-\rho)\Delta^R_2.
\end{align}

Denote by $\mathcal{F}_{i,j}$ the event that the algorithm flips $i$ zero-bits and $j$ one-bits in the left part. Observe that the quantity $\Delta_{1}^L$ does not depend on which flips happen in the right part of the search point $x^{t}$: Assume that we are in $\mathcal{F}_{i,j}$. There are three possible cases: (a) $i>j$, (b) $i=j$, and (c) $i<j$. In case (a), the offspring is accepted regardless of which flips happen in the right part. In case (b), the number of zero-bits in the left part of the offspring is the same as $X_L^t$, so the contribution to the drift $\Delta_{1}^L$ is zero. Lastly, in case (c), the offspring is rejected irrespective of which flips happen in the right part. Again, the contribution to $\Delta_{1}^L$ is zero. Therefore, we only need to consider case (a) in our computation of $\Delta_{1}^L$. Moreover, by Lemma~\ref{lem:several_flips}, all cases with $i\geq 2$ combined contribute $O(\beta_L^2)$ to the drift. Finally, $\Pr[\mathcal{F}_{1,0}] = (1\pm o(1))\chi e^{-\ell\chi}\beta_L$ follows from~\cite[Corollary~1.4.6]{doerr2020probabilistic}. By the law of total probability we thus have 
\begin{align}\label{eq:del1L} 
    \Delta_{1}^L 
    &=\sum_{i=1}^{x_L} \sum_{j=0}^{i-1} (i-j) \Pr \left[\mathcal{F}_{i,j}\right] 
 = (1\pm o(1))\chi e^{-\ell \chi}\beta_L.
\end{align}
To compute $\Delta_{1}^R$, we again distinguish the cases (a), (b), and (c) from above. As before, the events $\mathcal{F}_{i,j}$ for $i\ge 2$ together contribute at most a $O(\beta_R^2)$ term. The case (c) contributes zero to the drift $\Delta_{1}^R$, as in this case, the offspring is rejected. Let $\mathcal{A}$ and $\mathcal{B}$ denote the events corresponding to the first and second case, respectively. Similarly as before, we have $\Pr[\mathcal{A}] = (1+o(1))\Pr[\mathcal{F}_{1,0}] = (1\pm o(1))\chi e^{-\ell\chi}\beta_L$ and $\Pr[\mathcal{B}] = (1+o(1))\Pr[\mathcal{F}_{0,0}] = (1\pm o(1)) e^{-\ell\chi}$. In case (a), the offspring is always accepted. Since $(1-\ell) n$ bits in the right part of the string are flipped with probability $\chi/n$, and only a $o(1)$ fraction of them is zero-bits, the expected change in that part is thus $-(1\pm o(1))\chi(1-\ell)$ in case (a). In case (b), the calculation is analogous to~\eqref{eq:del1L}, and we get an expected change of $(1\pm o(1))\chi e^{-(1-\ell) \chi}\beta_R$ in the right part of the string. Putting the cases together we obtain
\begin{align}\label{eq:del1R}
\begin{split}
    \Delta_{1}^R & = (1\pm o(1))\chi e^{-\ell \chi}\beta_L \cdot (-\chi(1-\ell)) + (1\pm o(1)) e^{-\ell\chi} \cdot \chi e^{-(1-\ell)\chi}\beta_R \\
    & = - (1\pm o(1))(1-\ell)\chi^2 e^{-\ell \chi}\beta_L + (1\pm o(1)) \chi e^{-\chi}\beta_R.
\end{split}
\end{align}
Note that the first term will contribute to entry $c$ of the matrix, because it depends on $\beta_L = x_L/n$ (first column of the matrix) and contributes to the change of the right part of the string (second row of the matrix). Likewise, the second term contributes to $d$.

By symmetry of $f_1$ and $f_2$, we get analogous terms for the drift in the left and right part in the case that $f^t = f_2$ is chosen, except that the following roles are reversed: the indices $L$ and $R$; the densities $\beta_L$ and $\beta_R$; and the lengths $\ell$ and $(1-\ell)$: 
\begin{align}\label{eq:del2}
\begin{split}
    \Delta^R_{2} &= (1\pm o (1)) \chi e^{-(1-\ell)\chi} \beta_R,\\
    \Delta^L_{2} &= -(1 \pm o(1) ) \ell\chi^2e^{-(1-\ell)\chi} \beta_R + (1\pm o(1))\chi e^{-\chi}\beta_L.
\end{split}
\end{align}
Now we just need to plug~\eqref{eq:del1L},~\eqref{eq:del1R} and~\eqref{eq:del2} into~\eqref{eq:del_combine}, and obtain $\Delta^L = (1\pm o(1))a\beta_L + (1\pm o(1))b\beta_R$ and $\Delta^R = (1\pm o(1))c\beta_L + (1\pm o(1))d\beta_R$, as required. 
\qed
%
%
\end{proof}

\subsection{Dominiation and the Symmetric Case $\rho=\ell=1/2$}\label{sec:sym}

In this section, we will give the Domination Lemma~\ref{lem:comparison}, and we use it to study the symmetric case $\rho=\ell = 1/2$ in more detail. 
We will show two things for the \ooea on $\TwoLin^{.5,.5}$ beyond the general statement in Theorem~\ref{thm:main_EA}: for the positive result, we remove the condition that the algorithm must start with $o(n)$ zero-bits; and we show that there is a threshold $\chi_0\approx 2.557$ such that the algorithm is efficient for $\chi < \chi_0$ and inefficient for $\chi > \chi_0$. We start by inspecting the threshold condition $a\gamma_0+b >0$.
\begin{lemma}\label{lem:symmetric}
Let $a,b,c,d$ be as in Definition~\ref{def:driftmain} and $\gamma_0$ as in Lemma~\ref{lem:defgamma}. For $\rho = \ell = 1/2$, we have
\begin{align}
\label{eq:abcdsym}
a = d = \tfrac{\chi}{2}\big(e^{-\chi/2}+e^{-\chi}\big),  \qquad b = c =- \tfrac{\chi^2}{4}e^{-\chi/2}, \qquad \gamma_0 = 1.
\end{align}
Let $\chi_0 \approx 2.557$ be the unique positive root of $2-\chi +2e^{-\chi/2} =0$. Then $a\gamma_0+b > 0$ for $\chi <\chi_0$ and $a\gamma_0+b < 0$ for $\chi >\chi_0$.
\end{lemma}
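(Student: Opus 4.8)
The plan is to prove Lemma~\ref{lem:symmetric} in three stages: first verify the explicit formulas for $a,b,c,d$ by substituting $\rho=\ell=1/2$ into Definition~\ref{def:driftmain}; second, deduce $\gamma_0=1$ from the symmetry that forces $a=d$ and $b=c$; and third, analyze the sign of $a\gamma_0+b$ as a function of $\chi$.

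For the first stage, I would simply plug $\rho=\ell=1/2$ into each of the four formulas in~\eqref{eq:abcd}. For instance, $a = \tfrac12\chi e^{-\chi/2} + \tfrac12\chi e^{-\chi} = \tfrac{\chi}{2}(e^{-\chi/2}+e^{-\chi})$, and by the symmetry $\ell=1-\ell$, $\rho=1-\rho$ one gets $d=a$ immediately; similarly $b = -\tfrac12\cdot\tfrac12\chi^2 e^{-\chi/2} = -\tfrac{\chi^2}{4}e^{-\chi/2}$ and $c=b$. These are routine substitutions, so I would state them compactly.

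For $\gamma_0=1$, I would use Equation~\eqref{eq:defgamma}, namely $c\gamma^2+d\gamma = a\gamma+b$. Substituting $a=d$ and $b=c$ gives $b\gamma^2 + a\gamma = a\gamma + b$, i.e.\ $b\gamma^2 = b$, hence $\gamma^2=1$ (since $b\ne 0$). Because Lemma~\ref{lem:defgamma} guarantees a \emph{unique positive} root, we must have $\gamma_0=1$. This is clean and requires no case analysis.

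The only step with any real content is the sign of $a\gamma_0+b = a+b = \tfrac{\chi}{2}(e^{-\chi/2}+e^{-\chi}) - \tfrac{\chi^2}{4}e^{-\chi/2}$. I would factor out the positive quantity $\tfrac{\chi}{4}e^{-\chi}$ (valid since $\chi>0$) to isolate the sign-determining factor. Writing $a+b = \tfrac{\chi}{4}e^{-\chi}\big(2e^{\chi/2}+2 - \chi e^{\chi/2}\big)$, the bracket equals $e^{\chi/2}(2-\chi) + 2$; but to match the stated root equation $2-\chi+2e^{-\chi/2}=0$ it is cleaner to instead factor out $\tfrac{\chi}{4}e^{-\chi/2}$, giving $a+b = \tfrac{\chi}{4}e^{-\chi/2}\big(2e^{-\chi/2}+2-\chi\big)$. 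Since the prefactor $\tfrac{\chi}{4}e^{-\chi/2}$ is strictly positive, the sign of $a+b$ equals the sign of $g(\chi):=2-\chi+2e^{-\chi/2}$. The remaining task is to show $g$ has a unique positive root $\chi_0\approx 2.557$ with $g>0$ for $\chi<\chi_0$ and $g<0$ for $\chi>\chi_0$. For this I would observe that $g'(\chi) = -1 - e^{-\chi/2} < 0$ for all $\chi$, so $g$ is strictly decreasing; combined with $g(0)=4>0$ and $g(\chi)\to-\infty$ as $\chi\to\infty$, strict monotonicity yields a unique root $\chi_0$, and the sign pattern follows immediately. The numerical value $\chi_0\approx 2.557$ can be confirmed by evaluating $g$ at a nearby point. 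I do not anticipate a genuine obstacle here; the main thing to be careful about is matching the factored form to the exact root equation stated in the lemma and confirming strict monotonicity of $g$ to pin down uniqueness of the threshold.
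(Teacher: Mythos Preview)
Your proposal is correct and follows essentially the same route as the paper: substitute $\rho=\ell=1/2$ into~\eqref{eq:abcd}, use $a=d$, $b=c$ to reduce~\eqref{eq:defgamma} to $\gamma_0^2=1$, and factor $a+b = \tfrac{\chi}{4}e^{-\chi/2}(2-\chi+2e^{-\chi/2})$ to read off the sign from the monotone bracket. Your explicit computation of $g'(\chi)=-1-e^{-\chi/2}<0$ together with $g(0)=4$ and $g(\chi)\to-\infty$ is slightly more detailed than the paper's one-line ``the expression in the bracket is decreasing'', but the argument is the same.
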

\begin{proof}
The formulas for $a,b,c,d$ are simply obtained by plugging $\rho = \ell = 1/2$ into~\eqref{eq:abcd}. Since $a=d$, the defining Equation~\eqref{eq:defgamma} for $\gamma_0$ simplifies to $c\gamma_0^2 = b$, which implies $\gamma_0=1$. For the critical expression $a\gamma_0+b$ we obtain
\begin{align}
a\gamma_0 + b = \tfrac{\chi}{2}\big(e^{-\chi/2}+e^{-\chi}\big) - \tfrac{\chi^2}{4}e^{-\chi/2} = \tfrac{\chi e^{-\chi/2} }{4}\left(2-\chi + 2e^{-\chi/2}\right).
\end{align}
The expression in the bracket is decreasing in $\chi$, so it is positive for $\chi <\chi_0$ and negative for $\chi > \chi_0$.\qed
\end{proof}
Now we are ready to prove a stronger version of Theorem~\ref{thm:main_EA} for the case $\rho=\ell=.5$. The main difference is that the threshold is explicit and that we may assume that the algorithm starts with an arbitrary search point. Finally, the results also hold in expectation.
\begin{theorem}\label{thm:main_EA_sym}
Let $\chi >0$, $\rho = \ell =.5$, and consider the \ooea with mutation rate $\chi/n$ on $\TwoLin^{.5,.5}$, with uniformly random starting point. Let $\chi_0 \approx 2.557$ be the unique root of $2-\chi +2e^{-\chi/2} =0$. Let $T$ be the hitting time of the optimum.
\begin{enumerate}
\item[(a)] If $\chi < \chi_0$, then $T = O(n\log n)$ in expectation and with high probability.
\item[(b)] If $\chi > \chi_0$, then $T$ is superpolynomial in expectation and with high probability.
\end{enumerate}
\end{theorem}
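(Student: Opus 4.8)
The plan is to handle the two directions by different means: the negative part (b) reduces directly to Theorem~\ref{thm:main_EA}(b), while the positive part (a) uses the Domination Lemma~\ref{lem:comparison} to turn the genuinely two-dimensional state $(X_L,X_R)$ into a single potential $Z=X_L+X_R$ whose drift is positive on the \emph{whole} search space, thereby removing the starting condition. For (b), a uniformly random starting point has $Z^0 = n/2 \pm O(\sqrt n)$, so $Z^0 \ge n^{1/2+\Omega(1)}$ with high probability. By Lemma~\ref{lem:symmetric}, $\chi>\chi_0$ is equivalent to $a\gamma_0+b<0$, so Theorem~\ref{thm:main_EA}(b) yields $T=e^{n^{\Omega(1)}}$ with high probability. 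Since $\{Z^0 \ge n/4\}$ has probability $1-o(1)$ and, conditioned on it, $T$ is superpolynomial with high probability, we get $\Pr[T \ge e^{n^{\Omega(1)}}] = 1-o(1)$ and hence $\E[T] \ge (1-o(1))e^{n^{\Omega(1)}}$, which is superpolynomial as well; this settles (b) in both senses.

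For (a), I would exploit the left--right symmetry of $\TwoLin^{.5,.5}$ to analyze $Z$ as a one-dimensional potential. By Lemma~\ref{lem:symmetric} (where $\gamma_0=1$, $a=d$, $b=c$) the hypothesis $\chi<\chi_0$ is exactly $a+b>0$, and by Proposition~\ref{prop:driftmain} the drift of $Z$ for $Z=o(n)$ equals $\tfrac{1\pm o(1)}{n}\big[(a+c)x_L+(b+d)x_R\big] = (1\pm o(1))(a+b)Z/n$, which is both positive and multiplicative, irrespective of how the zero-bits split between the halves. To extend positivity to the entire search space, I invoke the Domination Lemma~\ref{lem:comparison}: it reduces positivity of the drift of $Z$ to positivity of the two single-zero-flip contributions $\Delta_{1,0}$ and $\Delta_{0,1}$, which by symmetry satisfy $\Delta_{1,0}(X_L,X_R)=\Delta_{0,1}(X_R,X_L)$. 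It then suffices to verify $\Delta_{1,0}>0$ at every state and to confirm that near the optimum this reproduces the condition $a+b>0$, i.e.\ $\chi<\chi_0$.

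Granting $\Delta_{1,0},\Delta_{0,1}>0$ everywhere, the drift of $Z$ is positive throughout. Combining this with the near-optimum expansion $(1\pm o(1))(a+b)Z/n$ and using continuity and compactness of the rescaled drift as a function of $(x_L/n,x_R/n)$ on the triangle $\{0\le x_L/n,\,x_R/n\le 1/2\}$, I would upgrade positivity to a uniform multiplicative lower bound: the drift of $Z$ is $\Omega(Z/n)$ at every state. A single application of the Multiplicative Drift Theorem then gives $T=O(n\log n)$ both with high probability and in expectation, from any starting point. The main obstacle is precisely the global sign analysis underlying the Domination Lemma's hypothesis: the threshold $\chi_0$ is derived from the near-optimum quantity $a+b$, and one must confirm that the single-flip drift $\Delta_{1,0}$, evaluated at an \emph{arbitrary} $(X_L,X_R)$, does not change sign before $\chi$ reaches $\chi_0$. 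It is exactly the reduction to these two dominant flip events, rather than the full sum over all flip patterns, that is intended to make this global computation tractable.
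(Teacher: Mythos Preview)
Your plan matches the paper's approach almost exactly: part~(b) is reduced to Theorem~\ref{thm:main_EA}(b) via Lemma~\ref{lem:symmetric}, and part~(a) uses the one-dimensional potential $Z=X_L+X_R$, the Domination Lemma~\ref{lem:comparison} to reduce global positivity to positivity of $\Delta_{1,0}$ and $\Delta_{0,1}$, and then the Multiplicative Drift Theorem.

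The one point where you diverge is how you obtain the uniform bound $\Delta=\Omega(Z/n)$ once $\Delta_{1,0},\Delta_{0,1}>0$ is established. You propose a continuity/compactness argument for the rescaled drift on the closed triangle; this is delicate because the boundary $Z=0$ lies in that compact set and the drift vanishes there, so you would have to argue carefully that the \emph{ratio} $n\Delta/Z$ extends continuously to the boundary (with the value $a+b$ at the origin) and is positive there, uniformly in $n$. The paper avoids this altogether: its explicit case analysis yields not just $\Delta_{1,0}>0$ but a state-independent constant lower bound $\Delta_{1,0}\ge (1-o(1))\tfrac{e^{-\chi}}{4}\big(2-\chi+2e^{-\chi/2}\big)$, and since $\Pr[\mathcal E_{1,0}\cup\mathcal E_{0,1}]=\Theta(Z/n)$, the multiplicative drift $\Omega(Z/n)$ drops out directly from the Domination Lemma inequality $\Delta\ge \Pr[\mathcal E_{1,0}]\Delta_{1,0}+\Pr[\mathcal E_{0,1}]\Delta_{0,1}$. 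So the ``main obstacle'' you correctly identify is in fact the whole game: once you carry out that computation with a \emph{quantitative} lower bound, the compactness detour becomes unnecessary.
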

\begin{proof}
The negative statement (b) follows immediately from Theorem~\ref{thm:main_EA} and Lemma~\ref{lem:symmetric}, where the former is applicable since whp a uniformly random starting point has $n^{1/2+\Omega(1)}$ zero-bits. Note that if the runtime is large whp, then it is also large in expectation. So it remains to show (a). As before, we denote by $X_L^t$ and $X_R^t$ the number of zero-bits in the left and right part of the $t$-th search point $x^{t}$, respectively. But now, instead of studying the two-dimensional vector $(X_L^t,X_R^t)$, we will study the one-dimensional potential $X_L^t+X_R^t$, which is simply the number of zero-bits in the string. Since we will now work with a one-dimensional potential, we will slightly adapt our naming conventions for the rest of this proof: we will mark all two-dimensional vectors explicitly with a vector symbol ($\vec X, \vec \Delta, \ldots$), while we use symbols without vector mark for the component sum of the corresponding vectors. So we write $\vec X^t =(X_L,X_R)$ and $X^t = X^t_L + X^t_R$; $\vec \Delta = (\Delta^L,\Delta^R)$ and $\Delta = \Delta^L + \Delta^R$, and so on. In this way, our potential function is denoted by $X^t$, and its drift is denoted by $\Delta$. To avoid confusion, we repeat and extend our notation. For $\vec x = (x_L, x_R) \in [n/2] \times [n/2]$, we let
\begin{align}\label{eq:sym_def1}
    \Delta = \Delta(\vec x) \coloneqq \E \left[ X^t- X^{t+1} \mid \vec X^t = \vec x \right] = \E \left[ X^t- X^{t+1} \mid X_L^t = x_L \land X_R^t=x_R\right].
\end{align}
We define $\Delta_k(\vec x)$ to be the drift conditional on flipping exactly $k$ zero-bits
\begin{align}\label{eq:sym_def2}
    \Delta_k(\vec x) & \coloneqq \E \Big[X^t- X^{t+1}\mid \vec X^t = \vec x \land \{k \text{ zero-bits flipped in step $t$}\}\Big].
\end{align}
To refine this notion, let $\mathcal{E}_{i,j}$ denote the event that the algorithm flips $i$ zero-bits in the left part, and $j$ zero-bits in the right part, respectively. We set $\Delta_{i,j}(\vec x)$ to be the drift conditional on $\mathcal{E}_{i,j}$:
 \begin{align}\label{eq:sym_def3}
    &\Delta_{i,j}(\vec x) \coloneqq  \E \left[ X^t- X^{t+1}\mid \mathcal{E}_{i,j} \land \vec X^t = \vec x  \right].
\end{align}
We will often omit the argument $\vec x$ of the various $\Delta$'s, to make it easier to read. Finally, we denote by $p_{r_1}$ and $q_{r_2}$ the probabilities of flipping $r_1$ and $r_2$ one-bits in the left and right part of the search point $x^{t}$, respectively. 

Essentially, in the proof of Proposition~\ref{prop:driftmain} we have computed for the general case that $\vec \Delta_1(\vec x) = (1\pm o(1))A\vec x$, and used that $\vec \Delta_k(\vec x)$ is negligible for $k \ge 2$. This was feasible since we were studying the regime $X^t = o(n)$. However, now that we want to extend this to arbitrary values of $X^t$, the terms for $k \ge 2$ are no longer negligible. We cope with them in an indirect way. We first show that $\Delta_{1,0} >0$ and $\Delta_{0,1} > 0$, which is rather easy. Then we show the implication: ``if both $\Delta_{1,0} >0$ and $\Delta_{0,1} > 0$, then $\Delta_{i,j} > 0$ for all $i,j \geq 0$ with $i+j > 0$.'' Hence, we obtain the lower bound $\Delta > \Pr[\mathcal{E}_{1,0}]\Delta_{1,0} + \Pr[\mathcal{E}_{0,1}]\Delta_{0,1}$, which is positive. This is reminiscent of a coupling argument, but formally it is an algebraic calculation and not a coupling. We will outsource the proof of that statement into its own lemma, Lemma~\ref{lem:comparison}. For this proof, we start by showing the following bounds.
\begin{align}\label{eq:10and01}
\begin{split}
\Delta_{1,0} &\ge (1-o(1))\tfrac{e^{-\chi}}{4}(2-\chi+2e^{-\chi/2}), \\
\Delta_{0,1} &\ge (1-o(1))\tfrac{e^{-\chi}}{4}(2-\chi+2e^{-\chi/2}).
\end{split}
\end{align}
Note that the lower bound is strictly positive for sufficiently large $n$, due to the assumption $\chi < \chi_0$.

To see the first line of~\eqref{eq:10and01}, we condition on $\mathcal E_{1,0}$, i.e., that the mutation flips one zero-bit in the left and no zero-bit in the right part. We first consider the case $f^t=f_1$, which has probability $1/2$, and its three subcases that the number of flipped one-bits in the left part is \emph{(a)} larger than one; \emph{(b)} exactly one; \emph{(c)} zero. In case (a), the offspring is rejected, and thus this case contributes zero to the drift. In case (b), the offspring is only accepted if no further one-bits are flipped in the right part, so this case also contributes zero. Case (c) happens with probability $(1-\chi/n)^{n/2-x_L} = (1\pm o(1))e^{-(\chi/2 - \chi x_L/n)}$, and in this case the offspring is always accepted. The expected number of one-bit flips in the right part of the string is $\chi/n\cdot (n/2-x_R) \le \chi/2$, so the expected change in the whole string is at least $1-\chi/2$, where the $1$ is the change in the left part. Altogether, the case $f^t=f_1$ contributes at least
\begin{align}\label{eq:symDelta1a}
\tfrac12(1\pm o(1))e^{-(\chi/2 - \chi x_L/n)}(1-\chi/2))
\end{align} 
to the drift. 
For the other case, $f^t=f_2$, we distinguish the cases that \emph{(a)} a one-bit in the right part is flipped; \emph{(b)} at least one one-bit is flipped in the left part, but none in the right part; \emph{(c)} no one-bit is flipped in the whole string. In case (a), the offspring is rejected. In case (b), the offspring is either rejected or has the same number of zero-bits as the parent. So both (a) and (b) contribute zero to the drift. Finally, case (c) happens with probability $(1-\chi/n)^{n-x_L-x_R} \ge (1\pm o(1))e^{-\chi}$ and implies a change of exactly $1$. So the case $f^t=f_2$ contributes at least $(1-o(1))\tfrac12 e^{-\chi}$. Adding this to \eqref{eq:symDelta1a}, this shows that
\begin{align}\label{eq:symDelta1b}
\begin{split}
\Delta_{1,0} &\ge (1\pm o(1))\tfrac12e^{-\chi/2-\chi x_L/n}(1-\chi/2) + (1\pm o(1))\tfrac12e^{-\chi} \\
&= (1\pm o(1))\tfrac14e^{-\chi/2-\chi x_L/n}\underbrace{(2-\chi + 2e^{-\chi/2 + \chi x_L/n} )}_{>0 \text{ since } \chi < \chi_0} \\
& \ge (1\pm o(1))\tfrac14e^{-\chi}(2-\chi + 2e^{-\chi/2} ).
\end{split}
\end{align}
Note that in the last step we have removed a factor larger than one, which is a decreasing step since the total expression is positive. Thus we have shown the first claim in~\eqref{eq:10and01}, and the second follows by symmetry. 

As mentioned above we will use the statement ``if both $\Delta_{1,0} >0$ and $\Delta_{0,1} > 0$, then $\Delta_{i,j} > 0$ for all $i,j \geq 0$ with $i+j > 0$'', which we prove in the Domination Lemma~\ref{lem:comparison} below. Here we only show how to conclude the proof of Theorem~\ref{thm:main_EA_sym}(b) modulo Lemma~\ref{lem:comparison}. By this lemma and \eqref{eq:10and01}, all $\Delta_{i,j}$ with $i+j > 0$ are positive. Moreover, $\Delta_{0,0}=0$. Thus, by the law of total probability, for all $\vec x = (x_L,x_R)$, 
\begin{align}
\begin{split}
\Delta(\vec x) & = \sum_{i=0}^{x_L}\sum_{j=0}^{x_R}\Pr[\mathcal{E}_{i,j}]\Delta_{i,j} \ge \Pr[\mathcal{E}_{0,1}]\Delta_{0,1} + \Pr[\mathcal{E}_{0,1}]\Delta_{0,1} \\
& \stackrel{\eqref{eq:10and01}}{\ge} (1-o(1))\tfrac{e^{-\chi}}{4}\left(2-\chi+2e^{-\chi/2}\right)\Pr[\mathcal{E}_{0,1} \text{ or } \mathcal{E}_{1,0}]\\
& \ge (1-o(1))\tfrac{e^{-\chi}}{4}\left(2-\chi+2e^{-\chi/2}\right)(x_L+x_R)\tfrac{\chi}{n}e^{-\chi} = \Omega(\tfrac{x_L+x_R}{n}).
\end{split}
\end{align}
Theorem~\ref{thm:main_EA_sym}(b) now follows from the Multiplicative Drift Theorem~\cite{lengler2020drift}. \qed
\end{proof}

It remains to show the Domination lemma. Interestingly, this lemma holds in larger generality. We prove it for arbitrary $\ell$ and $\rho$. However, this does not suffice to generalize Theorem~\ref{thm:main_EA_sym} to arbitrary $\rho$ and $\ell$. The point where it breaks is that it is generally not true that $\Delta_{0,1}$ and $\Delta_{1,0}$ are both positive if $\Delta_1$ is positive.  
\begin{lemma}[Domination Lemma]\label{lem:comparison}
Let $\rho,\ell\in [0,1]$, $x_L \in [\ell n]$ and $x_R  \in [(1-\ell)n]$. With the notation from~\eqref{eq:sym_def1}--~\eqref{eq:sym_def3}, if $\Delta_{0,1}(x_L,x_R)>0$ and $\Delta_{1,0}(x_L,x_R)>0$ then $\Delta_{i,j}(x_L,x_R) > 0$ for all $i,j \geq 0$ with $i+j > 0$.
\end{lemma}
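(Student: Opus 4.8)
The plan is to compute $\Delta_{i,j}(x_L,x_R)$ in closed form and then bound it below using nothing but the two hypotheses. Conditional on $\mathcal{E}_{i,j}$, the only remaining randomness is the number $R_1$ of one-bits flipped in the left part and the number $R_2$ flipped in the right part (independent, with $\Pr[R_1=r_1]=p_{r_1}$ and $\Pr[R_2=r_2]=q_{r_2}$), together with the choice of $f^t$. Writing $U:=i-R_1$ and $V:=j-R_2$ for the net gain of one-bits in the two parts and $u^+:=\max\{u,0\}$, the selection rule of the \ooea on \TwoLin reduces to a sign condition: under $f_1$ (heavy left part) the offspring is accepted iff $U>0$, or $U=0$ and $V\ge 0$, and symmetrically for $f_2$ with the roles of $U,V$ exchanged; an accepted step changes the zero-count by $U+V$. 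Averaging over $f^t$ (which is $f_1$ with probability $\rho$) gives $\Delta_{i,j}=\E[h(U,V)]$ for the explicit piecewise-linear function
\begin{align*}
h(u,v)=u^+\,\mathbbm 1[v\ge 0]+v^+\,\mathbbm 1[u\ge 0]+\rho(u+v)\,\mathbbm 1[u>0,\,v<0]+(1-\rho)(u+v)\,\mathbbm 1[u<0,\,v>0].
\end{align*}

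The first main step is to expand $\E[h(U,V)]$ using the independence of $R_1$ and $R_2$ and to regroup the resulting terms into
\begin{align*}
\Delta_{i,j}=\underbrace{a_i\big(1-(1-\rho)\Pr[R_2>j]\big)-\rho P_i\,\E[(R_2-j)^+]}_{=:\,G_1}+\underbrace{b_j\big(1-\rho\Pr[R_1>i]\big)-(1-\rho)Q_j\,\E[(R_1-i)^+]}_{=:\,G_2},
\end{align*}
where $a_i:=\E[(i-R_1)^+]$, $P_i:=\Pr[R_1<i]$, and $b_j,Q_j$ are the mirror-image right-hand quantities. Each group is an improvement gain minus a ``hitchhiking'' penalty: in $G_1$, under $f_1$ the left part improves (weight $P_i$) and the offspring is accepted even when the right part loses up to $(R_2-j)^+$ one-bits, a loss charged with probability $\rho$.

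The second step is to show that $G_1>0$ whenever $i\ge1$ and, by symmetry, $G_2>0$ whenever $j\ge1$; the claim $\Delta_{i,j}>0$ then follows by adding the two groups in each of the cases $(i\ge1,j=0)$, $(i=0,j\ge1)$, $(i,j\ge1)$. Here the hypotheses enter in exactly the right form: since $\Delta_{1,0}=p_0\big(\rho(1-S_2)+(1-\rho)q_0\big)$ with $S_2:=\E[R_2]$ and $p_0=\Pr[R_1=0]>0$, the assumption $\Delta_{1,0}>0$ is equivalent to the scalar inequality $\rho S_2<\rho+(1-\rho)q_0$, and symmetrically $\Delta_{0,1}>0$ is equivalent to $(1-\rho)S_1<(1-\rho)+\rho p_0$. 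To bound $G_1$ I would factor out $P_i$ via the elementary inequalities $a_i\ge P_i$ and $\E[(R_2-j)^+]\le\E[R_2]=S_2$, reducing the bracket to $\big(1-(1-\rho)\Pr[R_2>j]\big)-\rho S_2$, and then substitute the first hypothesis to get $G_1>P_i(1-\rho)\Pr[1\le R_2\le j]\ge 0$, with the strict gap inherited from the strict inequality in the hypothesis; as $P_i\ge p_0>0$ for $i\ge1$, this yields $G_1>0$.

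The main obstacle I expect is \emph{finding} this decomposition, not carrying out the estimates. The naive routes---arguing termwise in the double sum over $(r_1,r_2)$, or trying to dominate $\Delta_{i,j}$ by $\Delta_{i,0}+\Delta_{0,j}$---break down because the coefficient multiplying $a_i$ is $\rho+(1-\rho)q_j$ rather than the $\rho+(1-\rho)q_0$ that appears in the hypothesis, and because $S_2$ may exceed $1$. The resolution, which is the crux of the argument, is to pair the negative right-part contribution \emph{with its own improvement weight} $P_i$ and to note that the bound $\E[(R_2-j)^+]\le S_2$ collapses the whole penalty onto precisely the quantity $\rho S_2$ that $\Delta_{1,0}>0$ controls; no coupling is needed, only this algebraic regrouping together with the two convexity-type inequalities $a_i\ge P_i$ and $(R_2-j)^+\le R_2$.
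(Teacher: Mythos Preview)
Your proof is correct and is structurally different from the paper's argument, so a brief comparison is in order.

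The paper establishes the lemma in three steps: an explicit double-sum formula for $\Delta_{i,j}$ (Claim~1), a monotonicity statement $\Delta_{i,0}\ge (P_i/p_0)\,\Delta_{1,0}$ and its mirror image (Claim~2), and a superadditivity inequality $\Delta_{i,j}\ge \Delta_{i,0}+\Delta_{0,j}$ (Claim~3). The last step is carried out by expanding both sides into many labelled sub-sums and pairing them off term by term; it is elementary but long. Your route replaces this case analysis by the probabilistic identity $\Delta_{i,j}=\E[h(U,V)]$ and the exact decomposition $\Delta_{i,j}=G_1+G_2$, after which the two estimates $a_i\ge P_i$ and $\E[(R_2-j)^+]\le S_2$ together with the reformulated hypothesis $\rho S_2<\rho+(1-\rho)q_0$ finish each group in one line. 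In fact your decomposition also yields the paper's Claim~3 as a by-product: since $\Pr[R_2>j]\le \Pr[R_2>0]$ and $\E[(R_2-j)^+]\le S_2$, one gets $G_1\ge \Delta_{i,0}$ and $G_2\ge \Delta_{0,j}$ directly. So your approach is both shorter and conceptually cleaner, while the paper's version has the advantage of being fully explicit at the level of individual summands.

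One remark: you write that the route ``dominate $\Delta_{i,j}$ by $\Delta_{i,0}+\Delta_{0,j}$'' breaks down. It does not; this is precisely what the paper proves in Claim~3, and as noted above it also falls out of your own $G_1+G_2$ decomposition. Your concern about the factor $\rho+(1-\rho)q_j$ versus $\rho+(1-\rho)q_0$ is handled in both approaches in the same way, via the monotonicity $\Pr[R_2>j]\le \Pr[R_2>0]$. This does not affect the correctness of your argument, but the commentary there is misleading. Also, a minor point of presentation: the strictness in ``$G_1>P_i(1-\rho)\Pr[1\le R_2\le j]$'' sits in the bracket, not in the first inequality; to be precise one has $G_1\ge P_i\cdot B$ with $B>(1-\rho)\Pr[1\le R_2\le j]\ge 0$, and $G_1>0$ then follows from $P_i\ge p_0>0$ for $i\ge 1$.
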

\begin{proof}
We split the proof into several claims.
\begin{myclaim} We may write $\Delta_{i,j}$ as \label{cldelij}
\begin{align}
\Delta_{i,j}(x_L,x_R) & = \sum_{r_1=0}^{i} \sum_{r_2=0}^{j} (i+j-r_1-r_2)p_{r_1}\cdot q_{r_2}
\\ & \quad + \sum_{r_1= 0}^{i-1}\sum_{r_2=j+1}^{(1-\ell) n-x_R}(i+j-r_1-r_2)p_{r_1}\cdot q_{r_2} \cdot \rho
\\ & \quad + \sum_{r_1=i+1}^{\ell n-x_L} \sum_{r_2=0}^{j-1} (i+j-r_1-r_2)p_{r_1}\cdot q_{r_2}\cdot (1-\rho).
\end{align}
\end{myclaim}

\begin{proof} 
We denote by $\mathcal{A}^{r_1,r_2}$ the event that the algorithm flips $r_1$ one-bits in the first part and $r_2$ one-bits in the second part. Additionally, we define \begin{align}
    \Delta_{i,j}^{r_1,r_2}(x_L,x_R) \coloneqq \E \left[X^t- X^{t+1}\mid \mathcal{A}^{r_1,r_2} \land\mathcal{E}_{i,j} \land X^t_L=x_L \land X_R^t= x_R\right].
\end{align}
By the law of total expectation, we have 
\begin{align}
    \Delta_{i,j}(x_L,x_R)& = \sum_{r_1=0}^{\ell n-x_L}\sum_{r_2=0}^{(1-\ell)n-x_R} \Delta_{i,j}^{r_1,r_2}\cdot \Pr \left[\mathcal{A}^{r_1,r_2}\right]. \label{delij}
\end{align}
Note that the drift is $i+j-r_1-r_2$ in case the offspring $y^t$ is accepted and zero otherwise. We have from the law of total probability
\begin{align}\label{aaa1}
\begin{split}
    \Delta_{i,j}^{r_1,r_2}  = (i+j - r_1-r_2)  \cdot &\Big(\rho \cdot  \Pr\left[x^{t+1}= y^{t} \mid f^t=f_1\right]   \\& \quad +(1-\rho)\cdot \Pr\left[x^{t+1}= y^{t} \mid f^t=f_2\right]   \Big). 
\end{split}
\end{align}
We denote by $\mathbbm{1}_{\{\mathcal{E}\}}$ the indicator random variable of an event $\mathcal{E}$. It follows from the definition of $f_1$ and $f_2$ and the independence of the bit-flips in the first and second part that\begin{align}
     \Pr \left[x^{t+1}= y^{t} \mid f^t=f_1\right] & = \mathbbm{1}_{\{i>r_1\}}+\mathbbm{1}_{\{i=r_1\}} \mathbbm{1}_{\{j>r_2\}} +\tfrac{1}{2}\cdot\mathbbm{1}_{\{i=r_1\}}  \mathbbm{1}_{\{j=r_2\}}. \label{aaa2}
\end{align}
By an analogous argument, we get \begin{align}
    \Pr\left[x^{t+1}= y^{t} \mid f^t=f_2\right]&  =\mathbbm{1}_{\{j>r_2\}}+\mathbbm{1}_{\{j=r_2\}} \mathbbm{1}_{\{i>r_1\}}+\tfrac{1}{2}\cdot\mathbbm{1}_{\{i=r_1\}}   \mathbbm{1}_{\{j=r_2\}}. \label{aaa3}
\end{align}
We consider the following cases:
\begin{multicols}{2}
\begin{enumerate}[(a)]
\item $i\geq r_1$ and $j\geq r_2$,
\item $i> r_1$ and $j<r_2$, 
\item $i= r_1$ and $j<r_2$,
\item $i<r_1$ and $j> r_2$, 
\item $i<r_1$ and $j=r_2$, and
\item $i<r_1$ and $j<r_2$. 
\end{enumerate}
\end{multicols}
For case (a), we get from \eqref{aaa1}, \eqref{aaa2} and \eqref{aaa3}
\begin{align}
    \Delta_{i,j}^{r_1,r_2}  = (i+j-r_1-r_2)\cdot 1,
\end{align}
as if $i=r_1$ and $j=r_2$, then $i+j-r_1-r_2=0$, so we can take the factor $1$ instead of $\frac{1}{2}$ in that case. \\
In case (b) the offspring is accepted if and only if $f^t=f_1$. In accordance with this, we have from \eqref{aaa2} and \eqref{aaa3},
\begin{align}
    \Delta_{i,j}^{r_1,r_2}  = (i+j-r_1-r_2)\cdot \rho.
\end{align}
Similarly, for case (d), we have
\begin{align}
    \Delta_{i,j}^{r_1,r_2}  = (i+j-r_1-r_2)\cdot (1-\rho).
\end{align}
Lastly, we note that in the cases (c), (e), and (f), the offspring is always rejected, so \begin{align}
    \Delta_{i,j}^{r_1,r_2} =0.
\end{align}
Since the algorithm flips bits independently, we have $\Pr[\mathcal{A}^{r_1,r_2}]=p_{r_1}\cdot q_{r_2}$. Splitting the double sum in \eqref{delij} according to the above cases yields
\begin{align}
     \Delta_{i,j}(x_L,x_R)& =  \underbrace{\sum_{r_1=0}^{i} \sum_{r_2=0}^{j} (i+j-r_1-r_2)p_{r_1}\cdot q_{r_2}}_{\text{case (a).}}\\ 
     & + \underbrace{\sum_{r_1= 0}^{i-1}\sum_{r_2=j+1}^{(1-\ell)n-x_R} (i+j-r_1-r_2)\rho\cdot p_{r_1}\cdot q_{r_2}}_{\text{case (b).}}
     \\ 
     & + \underbrace{\sum_{r_1=i+1}^{\ell n-x_L} \sum_{r_2=0}^{j-1} (i+j-r_1-r_2)(1-\rho)\cdot p_{r_1}\cdot q_{r_2}}_{\text{case (d).}}, 
\end{align}
as desired.
\end{proof}

\begin{myclaim} \label{cl_ci}
There is $c_{i,x_L}> 0$ such that for all $1\leq i\leq x_L$ and all $x_R \in[(1-\ell)n]$,\begin{align}
    \Delta_{i,0}(x_L,x_R)\geq c_{i,x_L}\Delta_{1,0}(x_L,x_R).
\end{align}
Analogously, there is $d_{j,x_R}> 0 $ such that for all $1\leq j \leq x_R$ and all $x_L \in [\ell n]$,\begin{align}
    \Delta_{0,j}(x_L,x_R)\geq d_{j,x_R}\Delta_{0,1}(x_L,x_R).
\end{align}
\end{myclaim}

\begin{proof}
First, we apply Claim \ref{cldelij} to $\Delta_{1,0}$:
\begin{align}
    \Delta_{1,0}& = \sum_{r_1=0}^1 (1-r_1)p_{r_1}\cdot q_0 + \sum_{r_2=1}^{(1-\ell)n-x_R}(1-r_2)p_{0}\cdot q_{r_2}\cdot \rho\\
    & = p_0\cdot q_0 +  \sum_{r_2=1}^{(1-\ell)n-x_R}(1-r_2)p_{0}\cdot q_{r_2}\cdot \rho. \label{26_1}
\end{align}
We define $c_{i,x_L}$ as follows: \begin{align}
    c_{i,x_L} \coloneqq \frac{\sum_{r_1=0}^{i-1}p_{r_1}}{p_0}\geq 1 >0.
\end{align}
From \eqref{26_1} we have\begin{align}
     c_{i,x_L}  \Delta_{1,0} = q_0 \left(\sum_{r_1=0}^{i-1}p_{r_1} \right) + \left(\sum_{r_2=1}^{(1-\ell)n-x_R}(1-r_2)q_{r_2}\cdot \rho\right)\left(\sum_{r_1=0}^{i-1}p_{r_1} \right)  . \label{26_2}
\end{align}
Now, we apply Claim \ref{cldelij} to $\Delta_{i,0}$:
\begin{align}
     \Delta_{i,0}& = \sum_{r_1=0}^{i-1} (i-r_1) p_{r_1}\cdot q_0 + \sum_{r_1= 0}^{i-1}\sum_{r_2=1}^{(1-\ell)n-x_R}(i-r_1-r_2)p_{r_1}\cdot q_{r_2}\cdot \rho. \label{26_3}
\end{align}
Note that $i-r_1\geq 1$, as $r_1<i$. So we get from \eqref{26_3} and the fact that probabilities are non-negative\begin{align}
     \Delta_{i,0}&  \geq \sum_{r_1=0}^{i-1} p_{r_1}\cdot q_0 
     + \sum_{r_1=0}^{i-1}\sum_{r_2=1}^{(1-\ell)n-x_R}(1-r_2)p_{r_1}\cdot q_{r_2}\cdot \rho \\
     & =q_0 \sum_{r_1=0}^{i-1}  p_{r_1}+ \sum_{r_1= 0}^{i-1}p_{r_1}\sum_{r_2=1}^{(1-\ell)n-x_R}(1-r_2) q_{r_2} \cdot \rho.\label{26_4}
\end{align}
Since the term $\sum_{r_2=1}^{(1-\ell)n-x_R}(1-r_2) q_{r_2}\cdot \rho$ above does not depend on $r_1$, we can factor it out of the sum. Hence, we get with \eqref{26_2} and \eqref{26_4}
\begin{align}
    \Delta_{i,0} \geq c_{i,x_L}  \Delta_{1,0},
\end{align}
yielding the first part of the claim. If we set \begin{align}
    d_{j,x_R} \coloneqq \frac{\sum_{r_2=0}^{j-1}q_{r_2}}{q_0}\geq 1 >0,
\end{align}
the second part of the claim can be shown analogously. 
\end{proof}

\begin{myclaim} \label{laange}
For all  $0\leq i \leq x_L$ and $0\leq j \leq x_R$, we have \begin{align}
    \Delta_{i,j}(x_L,x_R)\geq \Delta_{i,0}(x_L,x_R)+\Delta_{0,j}(x_L,x_R).    
\end{align}
\end{myclaim}

\begin{proof}
We will show\begin{align}
    \Delta_{i,j}-\Delta_{i,0}-\Delta_{0,j}\geq 0. \label{to_show}
\end{align}
From Claim \ref{cldelij}, we get \begin{align}
     \Delta_{i,j}& = \underbrace{\sum_{r_1=0}^{i} \sum_{r_2=0}^{j} (i+j-r_1-r_2)p_{r_1}\cdot q_{r_2}}_{\eqqcolon \ A_0}
     \\ & \quad + \underbrace{\sum_{r_1= 0}^{i-1}\sum_{r_2=j+1}^{(1-\ell)n-x_R}(i+j-r_1-r_2)p_{r_1}\cdot q_{r_2}\cdot \rho}_{\eqqcolon\ B_0} 
     \\ & \quad + \underbrace{\sum_{r_1=i+1}^{\ell n-x_L} \sum_{r_2=0}^{j-1} (i+j-r_1-r_2)p_{r_1}\cdot q_{r_2}\cdot (1-\rho)}_{\eqqcolon\ C_0}= A_0+B_0+C_0.
\end{align}
We rewrite $A_0$ by writing the sums for $r_1=i$ and $r_2=j$ separately, and noting that the term for $r_1=i$ and $r_2=j$ is zero, so it can be neglected:\begin{align}
    A_0 & = \sum_{r_1=0}^{i} \sum_{r_2=0}^{j} (i+j-r_1-r_2)p_{r_1}\cdot q_{r_2} \\
    & = \underbrace{\sum_{r_1=0}^{i-1} \sum_{r_2=0}^{j-1}  (i+j-r_1-r_2)p_{r_1}\cdot q_{r_2}}_{\eqqcolon \ A_0^0} 
    +  \underbrace{\sum_{r_2=0}^{j-1}  (j-r_2)p_{i}\cdot q_{r_2}}_{\eqqcolon \ A_0^5}\\ & \qquad + \underbrace{\sum_{r_1=0}^{i-1} (i-r_1)p_{r_1}\cdot q_{j}}_{\eqqcolon \ A_0^6}= A_0^0+ A_0^5+A_0^6.
\end{align}
Now, we rewrite $A_0^0$. First, we multiply $A_0^0$ with the factor $(\rho + (1-\rho))=1$, then we use distributivity.
\begin{align}
    A_0^0 & = \left( \sum_{r_1=0}^{i-1} \sum_{r_2=0}^{j-1} (i+j-r_1-r_2)p_{r_1}\cdot q_{r_2}\right) \cdot \left(\rho + (1-\rho)\right)\\
    &= \sum_{r_1=0}^{i-1} \sum_{r_2=0}^{j-1} (i+j-r_1-r_2)p_{r_1}\cdot q_{r_2} \cdot \rho
    \\ &  + \sum_{r_1=0}^{i-1} \sum_{r_2=0}^{j-1} (i+j-r_1-r_2)p_{r_1}\cdot q_{r_2}\cdot (1-\rho).
\end{align}
Next, we separate the sum for $r_2= 0$ in the first double sum and the sum for $r_1=0$ in the second double sum:
\begin{align}
    A_0^0& = 
    \underbrace{\sum_{r_1=0}^{i-1} \sum_{r_2=1}^{j-1} (i+j-r_1-r_2)p_{r_1}\cdot q_{r_2}\cdot \rho}_{\eqqcolon \ A_0^1} + 
    \underbrace{\sum_{r_1=0}^{i-1}(i+j-r_1)p_{r_1}\cdot q_{0}\cdot \rho}_{\eqqcolon \ A_0^2} 
    \\ & \qquad  +
    \underbrace{\sum_{r_1=1}^{i-1} \sum_{r_2=0}^{j-1} (i+j-r_1-r_2)p_{r_1}\cdot q_{r_2}\cdot (1-\rho)}_{\eqqcolon \ A_0^3}
    \\ & \qquad  +
    \underbrace{\sum_{r_2=0}^{j-1}(i+j-r_2)p_{0}\cdot q_{r_2}\cdot (1-\rho)}_{\eqqcolon\ A_0^4}.
\end{align}
So we have $A_0 = A_0^1+A_0^2+A_0^3+A_0^4+A_0^5+A_0^6$. Next, we apply Claim \ref{cldelij} to $\Delta_{i,0}$:
\begin{align}
     \Delta_{i,0} = \underbrace{\sum_{r_1=0}^{i-1}  (i-r_1) p_{r_1}\cdot q_{0}}_{\eqqcolon\ A_1} + \underbrace{\sum_{r_1=0}^{i-1} \sum_{r_2=1}^{(1-\ell)n-x_R} (i-r_1-r_2)p_{r_1}\cdot q_{r_2}\cdot \rho}_{\eqqcolon\ B_1}= A_1+B_1,
\end{align}
where we note that in the first sum, the term for $r_1=i$ is equal to zero, so it can be omitted. We rewrite $B_1$ by splitting it into the cases $r_2<j$, $r_2=j$, and $r_2>j$. \begin{align}
    B_1&  = \underbrace{\sum_{r_1=0}^{i-1} \sum_{r_2=1}^{j-1} (i-r_1-r_2)p_{r_1}\cdot q_{r_2}\cdot \rho}_{\eqqcolon\ B_1^1} 
    + \underbrace{\sum_{r_1=0}^{i-1}(i-r_1-j)p_{r_1}\cdot q_{j}\cdot \rho}_{\eqqcolon\ B_1^2}\\ 
    & \qquad + \underbrace{\sum_{r_1=0}^{i-1} \sum_{r_2=j+1}^{(1-\ell)n-x_R} (i-r_1-r_2)p_{r_1}\cdot q_{r_2}\cdot \rho}_{\eqqcolon\ B_1^3}= B_1^1+B_1^2+B_1^3. \label{b_1}
\end{align}
Finally, we apply Claim \ref{cldelij} to $\Delta_{0,j}$. 
\begin{align}
     \Delta_{0,j}= \underbrace{\sum_{r_2=0}^{j-1}  (j-r_2) p_{0}\cdot q_{r_2}}_{\eqqcolon\ A_2} + \underbrace{\sum_{r_1=1}^{\ell n-x_L} \sum_{r_2=0}^{j-1} (j-r_1-r_2)p_{r_1}\cdot q_{r_2}\cdot (1-\rho)}_{\eqqcolon\ C_2}= A_2+C_2.
\end{align}
We rewrite $C_2$ in analogy to \eqref{b_1}, i.e., splitting into the cases $r_1<i$, $r_1=i$, and $r_1>i$.
\begin{align}
    C_2 & = \underbrace{\sum_{r_1=1}^{i-1} \sum_{r_2=0}^{j-1} (j-r_1-r_2)p_{r_1}\cdot q_{r_2}\cdot (1-\rho)}_{\eqqcolon\ C_2^1} 
    +\underbrace{\sum_{r_2=0}^{j-1} (j-i-r_2)p_{i}\cdot q_{r_2}\cdot (1-\rho)}_{\eqqcolon\ C_2^2} \\ 
    & \qquad +\underbrace{\sum_{r_1=i+1}^{\ell n-x_L} \sum_{r_2=0}^{j-1} (j-r_1-r_2)p_{r_1}\cdot q_{r_2}\cdot (1-\rho)}_{\eqqcolon\ C_2^3}
    = C_2^1+C_2^2+C_2^3. 
\end{align}
With the above preparations, \eqref{to_show} is equivalent to 
\begin{align}
\begin{split}
    & A_0+B_0+C_0-(A_1+B_1)-(A_2+C_2)= 
    A_0^1+A_0^2+A_0^3+A_0^4+A_0^5+A_0^6 \\ & \qquad +B_0+C_0 -A_1-B_1^1-B_1^2-B_1^3-A_2-C_2^1-C_2^2-C_2^3\geq 0.
\end{split}
\end{align}
Reordering the summands, we need
\begin{align}
\begin{split}
    &(A_0^1 - B_1^1)+ (A_0^2- A_1)+ (A_0^3 -C_2^1)+ (A_0^4 -A_2)\\
    & \qquad + (A_0^5 -C_2^2)+ (A_0^6-B_1^2 )+ (B_0-B_1^3)+ (C_0-C_2^3) \geq 0. \label{to_show2}
\end{split}
\end{align}
We compute \begin{align}
    A_0^5 -C_2^2 & = \sum_{r_2=0}^{j-1}  (j-r_2)p_{i}\cdot q_{r_2} - \sum_{r_2=0}^{j-1} (j-i-r_2)p_{i}\cdot q_{r_2}\cdot (1-\rho) \\
    & = \sum_{r_2=0}^{j-1} \left((1-\rho)i +\rho (j-r_2) \right)p_{i}\cdot q_{r_2} \geq 0,
\end{align}
as $\rho \leq 1$ and $r_2<j$. Similarly, \begin{align}
    A_0^6-B_1^2 = \sum_{r_1=0}^{i-1} ((1-\rho)(i-r_1)+ \rho j) p_{r_1}\cdot q_{j} \geq 0.
\end{align}
Furthermore, \begin{align}
    B_0-B_1^3 = \sum_{r_1= 0}^{i-1}\sum_{r_2=j+1}^{(1-\ell)n-x_R}j\cdot p_{r_1}\cdot q_{r_2} \cdot \rho \geq 0,
\end{align}
and \begin{align}
    C_0-C_2^3 & =\sum_{r_1=i+1}^{\ell n-x_L} \sum_{r_2=0}^{j-1} i\cdot p_{r_1}\cdot q_{r_2} \cdot (1-\rho) \geq 0.
\end{align}
Hence, to show \eqref{to_show2}, it suffices to show \begin{align}
    &(A_0^1 - B_1^1)+ (A_0^2- A_1)+ (A_0^3 -C_2^1)+ (A_0^4 -A_2)\geq 0. \label{sufficient}
\end{align}

Next, we compute \begin{align}
    A_0^1-B_1^1&  = \sum_{r_1=0}^{i-1} \sum_{r_2=1}^{j-1} j \cdot p_{r_1}\cdot q_{r_2} \cdot \rho, \\
    A_0^3 -C_2^1 & =\sum_{r_1=1}^{i-1} \sum_{r_2=0}^{j-1} i\cdot p_{r_1}\cdot q_{r_2}\cdot (1-\rho).
\end{align}
Additionally, we have\begin{align}
    A_0^2- A_1 & = \sum_{r_1=0}^{i-1} (i+j-r_1) p_{r_1}\cdot q_{0} \cdot \rho \ - \sum_{r_1=0}^{i-1} (i-r_1) p_{r_1}\cdot q_{0} \\
    & = \sum_{r_1=0}^{i-1} (\rho\cdot j - (1-\rho)i +  (1-\rho) r_1) p_{r_1}\cdot q_{0}\\ & \geq \sum_{r_1=0}^{i-1} (\rho\cdot j - (1-\rho)i)  p_{r_1}\cdot q_{0},
\end{align}
and similarly \begin{align}
    A_0^4 -A_2 & =\sum_{r_2=0}^{j-1}(i+j-r_2)p_{0}\cdot q_{r_2} \cdot (1-\rho)- \sum_{r_2=0}^{j-1}(j-r_2) p_{0}\cdot q_{r_2} \\
    & \geq \sum_{r_2=0}^{j-1}((1-\rho)i-\rho j)p_{0}\cdot q_{r_2}.
\end{align}
Furthermore, \begin{align}
    (A_0^1-B_1^1)+ (A_0^4 -A_2 ) & \geq  \sum_{r_2=1}^{j-1}j\cdot p_0\cdot q_{r_2} \cdot \rho
    +  \sum_{r_1=1}^{i-1} \sum_{r_2=1}^{j-1} j \cdot p_{r_1}\cdot q_{r_2} \cdot \rho
    \\& \quad  + ((1-\rho)i-\rho j)p_{0}\cdot q_{0}
    \\ & \quad + \sum_{r_2=1}^{j-1}((1-\rho)i-\rho j)p_{0}\cdot q_{r_2} 
    \\ & \geq ((1-\rho)i-\rho j)p_0\cdot q_0 .\label{a_0_1}
\end{align}
Similarly,
\begin{align}
    (A_0^2- A_1) + (A_0^3 -C_2^1) & \geq (\rho j-(1-\rho)i)p_0\cdot q_0  + \sum_{r_1=1}^{i-1} (\rho\cdot j - (1-\rho)i)  p_{r_1}\cdot q_{0}\\
    & + \sum_{r_1=1}^{i-1}  i\cdot p_{r_1}\cdot q_{0}\cdot (1-\rho)\\
    & +\sum_{r_1=1}^{i-1} \sum_{r_2=1}^{j-1} i\cdot p_{r_1}\cdot q_{r_2}\cdot (1-\rho)\\
    &\geq (\rho j-(1-\rho)i)p_0\cdot q_0. \label{a_0_2}
\end{align}
Combining \eqref{a_0_1} and \eqref{a_0_2}, yields
\begin{align}
\begin{split}
    (A_0^1 - B_1^1)+ (A_0^2- A_1)+ (A_0^3 -C_2^1)+ (A_0^4 -A_2) & \geq \\ ((1-\rho)i-\rho j)p_0\cdot q_0 +(\rho j-(1-\rho)i)p_0\cdot q_0 &  = 0.
\end{split}
\end{align}
yielding \eqref{sufficient}, which concludes the proof of Claim~\ref{laange}, and also the proof of Lemma~\ref{lem:comparison}.
\end{proof}\qed
\end{proof}

\section{Conclusion}
In this paper, we have shown how to handle two-dimensional multiplicative/linear drift, even in the presence of error terms. Naturally, our paper is only a first step in that direction. As we have discussed in Remark~\ref{rem:sigma}, we have only stated the main result, Theorem~\ref{thm:2Ddrift}, for one possible scaling, where the drift is of order $\Theta(1/n)$. While this is one of the most prominent cases, it will be worthwhile to develop the drift theorem further to cover more general settings. Also, the restriction to a sublinear value of $\|X^0\|$ in Theorem~\ref{thm:2Ddrift}(a) is not quite satisfactory, and we hope that it can be removed in future work. But the two most interesting questions clearly are:
\begin{itemize}
\item Can the methods developed here for the two-dimensional case be generalized to arbitrary finite dimensions, where the drift is given by $A\cdot X$ for a $d\times d$ matrix $A$? What conditions does $A$ need to satisfy? Rowe gives a sufficient condition for fast convergence in~\cite{rowe2018linear} (in the absence of error terms), but it remains unclear whether this condition is necessary.
\item Can we develop general methods for the two-dimensional case in which the drift is non-linear? We have given a possible approach in Section~\ref{sec:interpretation}, but currently this is only an idea, and a lot of development is needed to turn it into a general method or framework.
\end{itemize}

The second contribution of our paper is introducing the dynamic \TwoLin benchmark, and analyzing the \ooea on it. We believe that this result is quite interesting, as it shows a failure mode of the \ooea in a minimal example. We believe that the \TwoLin benchmark is interesting in its own right, and it is simple enough to be amenable to theoretical analysis. We are curious to see how other algorithms perform on this benchmark. 

One drawback of \TwoLin is that the two fitnesses $f_1(x)$ and $f_2(x)$ are very different. Another line of research could try to find functions $f_1,f_2$ such that $f_1(x) \approx f_2(x)$, and still the same failure mode happens. For example, it is unclear to us what happens if we replace the weights $1$ and $n$ by the weights $1$ and $2$.


\begin{thebibliography}{10}
\providecommand{\url}[1]{\texttt{#1}}
\providecommand{\urlprefix}{URL }
\providecommand{\doi}[1]{https://doi.org/#1}

\bibitem{antipov2019efficiency}
Antipov, D., Doerr, B., Yang, Q.: The efficiency threshold for the offspring
  population size of the $(\mu, \lambda)$ {EA}. In: Genetic and Evolutionary
  Computation Conference (GECCO). pp. 1461--1469 (2019)

\bibitem{colin2014monotonic}
Colin, S., Doerr, B., F{\'e}rey, G.: Monotonic functions in {EC}: anything but
  monotone! In: Genetic and Evolutionary Computation Conference (GECCO). pp.
  753--760 (2014)

\bibitem{dang2021escaping}
Dang, D.C., Eremeev, A., Lehre, P.K.: Escaping local optima with non-elitist
  evolutionary algorithms. In: AAAI Conference on Artificial Intelligence.
  vol.~35, pp. 12275--12283 (2021)

\bibitem{dang2021non}
Dang, D.C., Eremeev, A., Lehre, P.K.: Non-elitist evolutionary algorithms excel
  in fitness landscapes with sparse deceptive regions and dense valleys. In:
  Genetic and Evolutionary Computation Conference (GECCO). pp. 1133--1141
  (2021)

\bibitem{dang2016self}
Dang, D.C., Lehre, P.K.: Self-adaptation of mutation rates in non-elitist
  populations. In: Parallel Problem Solving from Nature (PPSN). pp. 803--813.
  Springer (2016)

\bibitem{doerr2020probabilistic}
Doerr, B.: Probabilistic tools for the analysis of randomized optimization
  heuristics. In: Theory of evolutionary computation, pp. 1--87. Springer
  (2020)

\bibitem{doerr2021lower}
Doerr, B.: Lower bounds for non-elitist evolutionary algorithms via negative
  multiplicative drift. Evolutionary Computation  \textbf{29}(2),  305--329
  (2021)

\bibitem{doerr2013mutation}
Doerr, B., Jansen, T., Sudholt, D., Winzen, C., Zarges, C.: Mutation rate
  matters even when optimizing monotonic functions. Evolutionary Computation
  \textbf{21}(1),  1--27 (2013)

\bibitem{doerr2012multiplicative}
Doerr, B., Johannsen, D., Winzen, C.: Multiplicative drift analysis.
  Algorithmica  \textbf{64},  673--697 (2012)

\bibitem{hevia2021self}
Hevia~Fajardo, M.A., Sudholt, D.: Self-adjusting population sizes for
  non-elitist evolutionary algorithms: why success rates matter. In: Genetic
  and Evolutionary Computation Conference (GECCO). pp. 1151--1159 (2021)

\bibitem{jagerskupper2007plus}
J{\"a}gersk{\"u}pper, J., Storch, T.: When the plus strategy outperforms the
  comma strategyand when not. In: Foundations of Computational Intelligence
  (FOCI). pp. 25--32. IEEE (2007)

\bibitem{janett2022twoppsn}
Janett, D., Lengler, J.: Two-dimensional drift analysis: Optimizing two
  functions simultaneously can be hard. In: Parallel Problem Solving from
  Nature (PPSN). Springer (2022)

\bibitem{jansen2007brittleness}
Jansen, T.: On the brittleness of evolutionary algorithms. In: Foundations of
  Genetic Algorithms (FOGA). pp. 54--69. Springer (2007)

\bibitem{kaufmann2022self}
Kaufmann, M., Larcher, M., Lengler, J., Zou, X.: Self-adjusting population
  sizes for the {$(1, \lambda)$-{EA}} on monotone functions. In: Parallel
  Problem Solving from Nature (PPSN). Springer (2022)

\bibitem{kaufmann2023onemax}
Kaufmann, M., Larcher, M., Lengler, J., Zou, X.: Onemax is not the easiest
  function for fitness improvements. In: Evolutionary Computation in
  Combinatorial Optimization (EvoCOP). pp. 162--178. Springer (2023)

\bibitem{kotzing2016concentration}
K{\"o}tzing, T.: Concentration of first hitting times under additive drift.
  Algorithmica  \textbf{75}(3),  490--506 (2016)

\bibitem{lehre2010negative}
Lehre, P.K.: Negative drift in populations. In: Parallel Problem Solving from
  Nature (PPSN). pp. 244--253. Springer (2010)

\bibitem{lengler2019general}
Lengler, J.: A general dichotomy of evolutionary algorithms on monotone
  functions. IEEE Transactions on Evolutionary Computation  \textbf{24}(6),
  995--1009 (2019)

\bibitem{lengler2020drift}
Lengler, J.: Drift analysis. In: Theory of Evolutionary Computation, pp.
  89--131. Springer (2020)

\bibitem{lengler2019does}
Lengler, J., Martinsson, A., Steger, A.: When does hillclimbing fail on
  monotone functions: An entropy compression argument. In: Analytic
  Algorithmics and Combinatorics (ANALCO). pp. 94--102. SIAM (2019)

\bibitem{lengler2020large}
Lengler, J., Meier, J.: Large population sizes and crossover help in dynamic
  environments. In: Parallel Problem Solving from Nature (PPSN). pp. 610--622.
  Springer (2020)

\bibitem{lengler2021runtime}
Lengler, J., Riedi, S.: {Runtime Analysis of the $(\mu+ 1)$-EA on the Dynamic
  BinVal Function}. In: Evolutionary Computation in Combinatorial Optimization
  (EvoCom). pp. 84--99. Springer (2021)

\bibitem{lengler2018noisy}
Lengler, J., Schaller, U.: The $(1+1)$-{EA} on noisy linear functions with
  random positive weights. In: Symposium Series on Computational Intelligence
  (SSCI). pp. 712--719. IEEE (2018)

\bibitem{lengler2018drift}
Lengler, J., Steger, A.: Drift analysis and evolutionary algorithms revisited.
  Combinatorics, Probability and Computing  \textbf{27}(4),  643--666 (2018)

\bibitem{lengler2021complex}
Lengler, J., Sudholt, D., Witt, C.: The complex parameter landscape of the
  compact genetic algorithm. Algorithmica  \textbf{83}(4),  1096--1137 (2021)

\bibitem{lengler2021exponential}
Lengler, J., Zou, X.: Exponential slowdown for larger populations: The $(\mu+
  1)$-{EA} on monotone functions. Theoretical Computer Science  \textbf{875},
  28--51 (2021)

\bibitem{neumann2010few}
Neumann, F., Sudholt, D., Witt, C.: A few ants are enough: {ACO} with
  iteration-best update. In: Genetic and Evolutionary Computation Conference
  (GECCO). pp. 63--70 (2010)

\bibitem{oliveto2015improved}
Oliveto, P.S., Witt, C.: Improved time complexity analysis of the simple
  genetic algorithm. Theoretical Computer Science  \textbf{605},  21--41 (2015)

\bibitem{rowe2018linear}
Rowe, J.E.: Linear multi-objective drift analysis. Theoretical Computer Science
   \textbf{736},  25--40 (2018)

\bibitem{rowe2014choice}
Rowe, J.E., Sudholt, D.: The choice of the offspring population size in the (1,
  $\lambda$) evolutionary algorithm. Theoretical Computer Science
  \textbf{545},  20--38 (2014)

\bibitem{sudholt2019choice}
Sudholt, D., Witt, C.: On the choice of the update strength in
  estimation-of-distribution algorithms and ant colony optimization.
  Algorithmica  \textbf{81}(4),  1450--1489 (2019)

\bibitem{witt2013tight}
Witt, C.: Tight bounds on the optimization time of a randomized search
  heuristic on linear functions. Combinatorics, Probability and Computing
  \textbf{22}(2),  294--318 (2013)

\end{thebibliography}

\end{document}